\documentclass{article}

\usepackage{microtype}
\usepackage{graphicx}
\usepackage{booktabs} 
\usepackage{arydshln}
\usepackage[algo2e]{algorithm2e} 
\usepackage{algorithm,algorithmic}

\usepackage{hyperref}
\usepackage[accepted]{icml2021}

\usepackage{amsmath,amsfonts,bm}


















\def\1{\bm{1}}








\def\vtheta{{\bm{\theta}}}
\def\vphi{{\bm{\phi}}}
\def\vh{{\bm{h}}}

\def\vb{{\bm{b}}}

\def\vd{{\bm{d}}}

\def\vf{{\bm{f}}}

\def\vh{{\bm{h}}}


\def\mA{{\bm{A}}}

\def\mD{{\bm{D}}}

\def\mF{{\bm{F}}}

\def\mI{{\bm{I}}}

\def\mN{{\bm{N}}}

\def\mP{{\bm{P}}}

\def\mPhi{{\bm{\Phi}}}

\def\mGamma{{\bm{\Gamma}}}

\DeclareMathAlphabet{\mathsfit}{\encodingdefault}{\sfdefault}{m}{sl}
\SetMathAlphabet{\mathsfit}{bold}{\encodingdefault}{\sfdefault}{bx}{n}











\newcommand{\E}{\mathbb{E}}

\newcommand{\sigmoid}{\sigma}



\usepackage{amsthm}
\newtheorem{theorem}{Theorem}

\newtheorem{proposition}[theorem]{Proposition}

\usepackage{tikz}
\usetikzlibrary{arrows,shapes,fit,shadows,decorations.pathmorphing}
\usepackage{subfigure,wrapfig}

\tikzstyle{circ}=[circle,draw=black,thick,minimum size=8mm,>=stealth] 
\tikzstyle{state}=[circle,draw=black,minimum size=2em,>=stealth] 
\tikzstyle{ellip}=[ellipse,draw=black,thick,minimum size=4mm, text width=.4cm]
\tikzstyle{system} = [draw, dotted, minimum height=2em]
\tikzstyle{input} = [text centered, minimum height=2em]
\tikzstyle{null} = [inner sep=0, outer sep=0]
\tikzstyle{op} = [draw, fill=blue!20, text centered,
                  minimum height=2em, rounded corners]
\tikzstyle{mlp} = [draw, text width=10em, fill=blue!20, text centered,
                  minimum height=2em, rounded corners]
\tikzstyle{inp} = [text centered, minimum height=1em]
\tikzstyle{dgraph}=[->, line width=1pt]


\def\w{\mathbf{w}}
\def\e{\mathbf{e}}
\usepackage[capitalize]{cleveref}
\usepackage{cancel}

\icmltitlerunning{Emphatic Algorithms for Deep Reinforcement Learning}

\begin{document}

\twocolumn[
\icmltitle{Emphatic Algorithms for Deep Reinforcement Learning}

\begin{icmlauthorlist}
\icmlauthor{Ray Jiang}{dm}
\icmlauthor{Tom Zahavy}{dm}
\icmlauthor{Zhongwen Xu}{dm}
\icmlauthor{Adam White}{dm,ua}\\
\icmlauthor{Matteo Hessel}{dm}
\icmlauthor{Charles Blundell}{dm}
\icmlauthor{Hado van Hasselt}{dm}
\end{icmlauthorlist}

\icmlaffiliation{dm}{DeepMind, London, UK.}
\icmlaffiliation{ua}{Amii, Department of Computing Science,
University of Alberta}

\icmlcorrespondingauthor{Ray Jiang}{rayjiang@google.com}

\icmlkeywords{Machine Learning, ICML, emphatic, reinforcement learning}

\vskip 0.3in
]



\printAffiliationsAndNotice{}  

\begin{abstract}
Off-policy learning allows us to learn about possible policies of behavior from experience generated by a different behavior policy. Temporal difference (TD) learning algorithms can become unstable when combined with function approximation and off-policy sampling---this is known as the ``deadly triad''. Emphatic temporal difference (ETD($\lambda$)) algorithm ensures convergence in the linear case by appropriately weighting the TD($\lambda$) updates. In this paper, we extend the use of emphatic methods to deep reinforcement learning agents. We show that naively adapting ETD($\lambda$) to popular deep reinforcement learning algorithms, which use forward view multi-step returns, results in poor performance.
We then derive new emphatic algorithms for use in the context of such algorithms, and we demonstrate that they provide noticeable benefits in small problems designed to highlight the instability of TD methods.
Finally, we observed improved performance when applying these algorithms at scale on classic Atari games from the Arcade Learning Environment.
\end{abstract}

\label{introduction}

Off-policy learning, whereby an agent learns from behavior that differs from its current policy, affords an agent opportunities to accumulate rich knowledge \citep{degris2012knowledge} by learning about the effect of different policies of behaviors. This can also be extended to learn about different goals, e.g., by learning \emph{general value functions} \citep{sutton2011horde} for cumulants that differ from the main task reward. Unfortunately, it is well known that reinforcement learning algorithms \citep{sutton2018} can become unstable when combining function approximation, off-policy learning, and bootstrapping \citep{Tsitsiklis_VanRoy:97}---for this reason such combination is referred to as the \emph{deadly triad} \citep{sutton2018,hasselt2018}.

Many reinforcement learning (RL) agents learn off-policy to some extent, to learn about the greedy policy while exploring \citep{watkins1989}, to make predictions about policies simultaneously \citep{sutton2011horde, zahavy2020, jaderberg2016}, to improve sample complexity via experience replay \citep{Lin:1992,mnih2015}, or even just to correct for the latency introduced by distributed computation \citep{espeholt2018}. Since these algorithms make use of bootstrapping and function approximation, they may suffer from deadly triad symptoms of ``soft divergence'' and slow convergence \citep{hasselt2018}.

The ETD($\lambda$) algorithm \citep{sutton2016emphatic} ensures convergence with linear function approximation \citep{yu2015} by weighting the updates of TD($\lambda$) (in the backward view, with eligibility traces). However, combining eligibility traces and deep neural networks can be challenging \citep{sutton1989}\footnote{See \citet{vanHasselt:2021} for recent developments.}, and thus widely used deep RL systems instead typically use $n$-step forward view methods. Overall, none of the existing solutions to the deadly triad \citep{Sutton:2009,sutton2016emphatic} have become standard practice in deep RL. 
In this paper, we extend the emphatic method to multi-step deep RL learning targets, including an off-policy value-learning method known as `V-trace' \citep{espeholt2018} that is often used in actor-critic systems.

The structure of this paper is the following. Sec.~\ref{sec:background} explains the background on forward view learning targets and ETD($\lambda$). Next we adapt ETD($\lambda$) to the forward view in Sec.~\ref{sec:wetd}, and derive a new multi-step emphatic trace for $n$-step TD in Sec.~\ref{sec:netd}. We discuss further algorithmic considerations in Sec.~\ref{sec:variants}, including extensions for variance reduction, for the V-trace value learning target and for the actor critic learning algorithms. Empirically, we provide an in-depth comparison of their qualitative properties on small diagnostic MDPs in Sec.~\ref{sec:mdp_exp}. Finally, we demonstrate that combining emphatic trace with deep neural networks can improve performance on classic Atari video games in Sec.~\ref{sec:atari_exp}, reporting the highest score to date for an RL agent without experience replay in the 200M frames data regime: $497\%$ median human normalized score across 57 games, improved from the baseline performance of 403\%.

\section{Background}
\label{sec:background}
A Markov decision process \citep[MDP; ][]{bellman1957} consists of finite sets of states $\mathcal{S}$ and actions $\mathcal{A}$, a reward function $r: \mathcal{S}\times \mathcal{A} \mapsto \mathbb{R}$, a transition distribution $P(s'|s, a)$ $s,s'\in\mathcal{S},a \in \mathcal{A}$, and a discount factor $\gamma$. A policy is a distribution over actions conditioned on the state: $\pi(a | s)$. The goal of RL  is to find a policy $\pi$ that maximizes the expected discounted \emph{return} $v_{\pi}(s) \doteq \E_{\pi}\left[\sum_{t\geq 0} (\prod_{i=1}^t \gamma_t) R_{t+1} \right]$ where, at time $t$, $\gamma_t$ denotes the scalar discount, $S_t \in \mathcal{S}$ the state variable, $A_t\in\mathcal{A}$ the action taken and $R_{t+1} \doteq r(S_t, A_t)$ the reward.\footnote{We use the notation ``$\doteq$'' to indicate an equality by definition rather than by derivation.}

\subsection{TD($\lambda$)}

Policy \emph{evaluation} is the problem of learning to predict the value $V_{\vtheta}(s) \approx v_{\pi}(s)$, for all states $s$, under an arbitrary (fixed) policy $\pi$ and parametrized by $\vtheta$. When using function approximation, each state $S_t$ is associated with a feature vector $\vphi_t$, and the agent's value estimates $V_{\vtheta}(s)$ are a parametric function of these features. TD($\lambda$) \citep{Sutton:1988} is a widely used algorithm for policy evaluation where, on each step $t$, the parameters of $V_{\vtheta}$ are updated according to
\[
\vtheta_{t+1} \doteq \vtheta_t + \alpha_t \delta_t \e_t \,,
\]
where $\e_t = \gamma_t \lambda \e_{t-1} + \nabla_{\vtheta} V_{\vtheta}(S_t)$ is an \emph{eligibility trace} of value gradients, $\delta_t = R_{t+1} + \gamma v_{\vtheta}(S_{t+1}) - V_{\vtheta}(S_t)$ is the \emph{temporal difference (TD) error}, and $\alpha_t \in [0, 1]$ is the step-size. With linear function approximation $V_{\vtheta}(t) = \vtheta^\top \vphi_t$, the gradient $\nabla_{\vtheta}V_{\vtheta}(S_t)$ is the state features $\vphi_t$. TD($\lambda$) uses \emph{bootstrapping}, where the agent's own value estimates $V_{\vtheta}(S_t)$ are used to update the values online, on each step, without waiting for the episodes to fully resolve. TD algorithms can also be used to learn policies (i.e. for \emph{control}), by using similar updates to learn action values, or by combining value learning with policy gradients in actor-critic systems \citep{Sutton:2000}.

Temporal difference algorithms can be extended to policy evaluation (or control) in \emph{off-policy} settings, where the agent learns predictions about a \emph{target} policy $\pi$, from trajectories $(S_i, A_i, R_{i+1})_{i=t}^{t+n}$ sampled under a different \emph{behavior} policy $\mu$. However, when combining function approximation with bootstrapping and off-policy learning, the parameters may diverge \citep{baird1995,Tsitsiklis_VanRoy:97}, a phenomenon referred to as the \emph{deadly triad}.

\subsection{Emphatic TD($\lambda$)}

\label{sec:etd}
{\em Emphatic TD($\lambda$)}~\cite{sutton2016emphatic} resolves the instability due to the deadly triad by adjusting the magnitude of updates on each time-step. The idea is to re-weight the distribution of TD($\lambda$) updates to account for the likelihood of the trajectory leading to the updated state, under the target policy. Each update is emphasized or de-emphasized by a scalar {\em follow-on trace}\footnote{The original formula $F^e_t \doteq \gamma(S_t) \rho_{t-1} F^e_{t-1} + i_t$  has an additional scalar $i_t$, indicating ``interest'' in state $S_t$. We let $i_t \doteq 1$.}:
\begin{equation}\label{eq:followon}
F_t \doteq \gamma(S_t) \rho_{t-1} F_{t-1} + 1.
\end{equation}
The
Emphatic TD($\lambda$) algorithm~\cite{sutton2016emphatic}, ETD($\lambda$) for short, incorporates $F_t$ into the conventional eligibility trace update of TD($\lambda$) by emphasizing states
\begin{equation} \label{eq:etd_lambda} e_t \doteq \rho_t \big(\gamma(S_t) \lambda(S_t) e_{t-1} + M_t \vphi_t\big)\,. \notag \end{equation}
where $\rho_t \doteq \frac{\pi(A_t|S_t)}{\mu(A_t|S_t)}$ is the {\em importance sampling (IS)} ratio for the target policy $\pi$ and the behavior policy $\mu$. The \emph{emphatic trace} $M_t$ encodes how much the current state is bootstrapped by other states based on the follow-on trace:
\begin{align}
\label{eq:etd}
    M_t = \lambda(S_t) + (1 - \lambda(S_t)) F_t \,.
\end{align}
Prior work on off-policy TD($\lambda$) corrected the state-distribution using the stationary distribution induced by the target policy \cite{precup2001off}, unlike ETD($\lambda$) which uses the distribution of states produced by starting the target policy in the stationary distribution of the behavior policy. 

Extensions to ETD($\lambda$) include the ETD($\lambda$, $\beta$) algorithm, which uses an additional hyper-parameter $\beta$ in place of the discount $\gamma$ to control variance by setting $\beta < \gamma$ \cite{hallak2016}, and the ACE algorithm that applies emphatic weightings to policy gradient updates \cite{imani2018}.

ETD($\lambda$) is convergent with \emph{linear} function approximation \cite{yu2015}, but its performance when combined with \emph{non-linear} function approximation has not yet been extensively evaluated.

\subsection{$n$-step TD}
In this paper, we generalize the emphatic approach to widely used deep RL systems, and in particular actor-critic systems. Contrasting with the backward view TD($\lambda$) learning target for which ETD($\lambda$) was developed, deep RL algorithms are often based on a \emph{forward view} of temporal difference learning, where updates are computed on trajectories of fixed length, without making use of eligibility traces.

If we use a linear value function parametrized by $\vtheta$, then the $n$-step TD update for parameters $\vtheta$ in the first state is
\begin{align}
\label{eq:tdn}
     \vtheta_{t+1} \doteq \vtheta_t + \alpha \sum_{i=t}^{t+n-1} (\prod_{j=t}^{i-1} \rho_j \gamma_{j+1} ) \hskip0.1cm \rho_i \delta_i(\vtheta_t) \vphi_t,
\end{align}
where
\begin{align}
\delta_i(\vtheta_t) = R_{i+1} + \gamma_{i+1}V_{\vtheta_t}(S_{i+1})-V_{\vtheta_t}(S_i) \,.
\end{align}
$n$-step TD can be implemented in a computationally efficient way where multiple states in a trajectory are updated at once. This can be done in two ways. In a {\em fixed} update scheme all states are updated with $n$ step TD target for a fixed constant $n$. In a {\em mixed} update scheme, the $k$-th sample in the trajectory uses an $(n-k)$-step TD target---this is convenient when we used a small batch of temporal data, and all returns bootstrap on the last available state at the end of this window. 

\subsection{V-trace}
Given a trajectory of data, sampled from behavior policy $\mu$, the $n$-step V-trace estimator can be used as target to learn the value of state $S_t$ under the target policy $\pi$. Let $G_t$ be the V-trace target:
\begin{align}
\label{eq:vtrace_orig}
G_t \doteq V_{\vtheta_t}(S_t) + \hskip-0.1cm \sum_{i=t}^{t+n-1} (\prod_{j=t}^{i-1} \bar{c}_j \gamma_{j+1}) \hskip0.1cm \bar{\rho_i} \delta_i(\vtheta_t),
\end{align}
where $\bar{\rho}_i \doteq \min(\bar{\rho}, \frac{\pi(A_i|S_i)}{\mu(A_i|S_i)})$, $\bar{c}_j \doteq \min(\bar{c}, \frac{\pi(A_j|S_j)}{\mu(A_j|S_j)})$. The clipping hyper-parameters $\bar{\rho}$ and $\bar{c}$ were introduced to reduce variance of the $n$-step off-policy TD target. In practice, the clipping thresholds $\bar{c}$ and $\bar{\rho}$ are often equal, so that $\bar{c}_t = \bar{\rho}_t$.

Modifying $\bar{c}$ does not change the fixed point of the (tabular) V-trace update (see the proof of the V-trace fixed point in Appendix A of \citet{espeholt2018}, and see also \citet{Mahmood:2017AB}). Modifying $\bar{\rho}$ does change the fixed point, which corresponds to the value of the following policy $\pi_{\bar{\rho}}$: 
\begin{align}
\label{eq:vtrace_fp}
    \pi_{\bar{\rho}}(a|s) &\doteq \frac{\min (\bar{\rho}\mu(a|s), \pi(a|s))}{\sum_{a'\in \mathcal{A}}\min (\bar{\rho}\mu(a'|s), \pi(a'|s))}.
\end{align}
With linear functions the V-trace update closely matches \eqref{eq:tdn}, except in clipping all IS weights. 

\subsection{Actor-critics}

The V-trace update is most often used in actor-critic systems. Here, in addition to using it for learning values (the \emph{critic}) we can use the V-trace target also in the policy update.

Consider a current policy $\pi_{\w}$. Following the derivation of policy gradient in \citet{espeholt2018}, we may update policy parameters $\w$ in the direction of the policy gradient
\begin{align}
    \bar{\rho}_t (R_{t+1} + \gamma_{t+1} G_{t+1} - V_{\vtheta}(S_t)) \nabla_{\w}\log \pi_{\w}(A_t| S_t) \,,
\end{align}
where $G_{t+1}$ is the V-trace value target from time step $t+1$ onward. This has been very successful \citep[e.g.,][]{espeholt2018,hessel2019multi} in setting where the off-policyness is mild.

\section{Proposed Emphatic Methods}
Similar to TD($\lambda$), off-policy $n$-step TD can suffer from unstable learning due to the deadly triad. In the appendix, we analyze the update and derive conditions that guarantee stable learning when the behavior policy is sufficiently close to the target policy. However, these conditions are often violated in practice when the policies are too different. Then emphatic methods could help stabilize learning by mitigating the mismatch in steady-state distributions under the target and behavior policies. Therefore, we now first adapt ETD($\lambda$) to make use of $n$-step targets and analyze its properties, and then introduce new updates that combine emphatic methods with off-policy targets based on V-trace.

\subsection{Windowed ETD($\lambda$) --- WETD}
\label{sec:wetd}
As described in Sec.~\ref{sec:etd}, ETD($\lambda$) uses TD($\lambda$) as its learning target. To extend this idea, we adapt ETD($\lambda$) to use update windows of length $n$. Each state in the window is updated with a variable bootstrap length, all bootstrapping on the last state in the window --- this is the mixed update scheme. We formulate the learning target TD($\lambda$) as a {\em mixed $n$-step target} by setting $\lambda_t$ to 0 every $n$ steps:
\begin{align} 
\label{eq:tdn_lambda}
    \lambda_t \doteq \begin{cases}
                    0, & \text{if $t\bmod{n}=0$}.\\
                    1, & \text{otherwise}.
                \end{cases}
\end{align}
The bootstrapping step of the update is at the nearest future time step that is a multiple of the window size $n$. We set $\lambda$ to 0 at the end of every update window canceling all future TD errors from that point onward. More details are provided in the appendix.
ETD($\lambda$), as in \eqref{eq:etd_lambda}, was originally derived for state-dependent $\lambda(S_t)$. To apply the same derivation to a time-dependent $\lambda_t$, we note that under mild assumptions (that state visits are non-periodic), the value of $\lambda_t$ in \eqref{eq:tdn_lambda} is statistically independent of the state. This means the expected updates are asymptotically equivalent to using a uniform $\lambda = \mathbb{E}_{d_\mu}[ \lambda_t ] = 1 - 1/n$.

The {\em windowed ETD($\lambda$)} (WETD) algorithm is then defined by using $\lambda_t$ from \eqref{eq:tdn_lambda} in the ETD($\lambda$) update in \eqref{eq:etd}, so that
\begin{align}
\label{eq:wetd}
    M^w_t &\doteq \lambda_t + (1 - \lambda_t) F_t\,,
\end{align}
with \eqref{eq:followon} and \eqref{eq:tdn_lambda}. The WETD-corrected $n$-step TD target is obtained by multiplying $M^w_t$ to weight each update:
\begin{align}
     \vtheta_{t+1} \doteq \vtheta_t + \alpha M^w_t\sum_{i=t}^{t+n-1} (\prod_{j=t}^{i-1} \gamma_{j+1}\rho_j ) \hskip0.1cm \rho_i \delta_i(\vtheta_t) \vphi_t\,.
\end{align}
The algorithm is shown below in Algo.~\ref{alg:wetd}.

\begin{algorithm}[t]
\textbf{Input: Target policy $\pi$, behavior policy $\mu$, bootstrapping length $n$, gradient step size $\alpha$, discounts $\gamma_t$, state features $\vphi_t$.}\\
Initialize model parameters $\vtheta_0$.\\
Initialize $F_0=1$.\\
\vspace{0.08cm}
\For{$t\in[0, n, 2n, \ldots, Ln]$}{
  Sample trajectory $(S_i, A_i, R_{i+1})_{i=t}^{t+n} \sim \mu$.\\
  Set $\rho_i = \pi(A_i|S_i)/\mu(A_i|S_i)$, for $i=t, \ldots, t+n$.\\
  \For{$k\in[0, \ldots, n-1]$}{
      Compute $F_{t+k+1} = \gamma_{t+k+1} \rho_{t+k} F_{t+k} + 1$.\\
      \textbf{if} $k=0$ \textbf{then} $M^w_{t+k} = F_{t+k}$ \textbf{else} $M^w_{t+k} = 1$ \textbf{end}\\
      Update model parameters: \\
      $\vtheta_{t+k+1} = \vtheta_{t+k} + $\\
      $\alpha M^w_{t+k}\sum_{i=t+k}^{t+n-1} (\prod_{j={t+k}}^{i-1} \gamma_{j+1}\rho_j ) \hskip0.1cm \rho_i \delta_i(\vtheta_{t+k}) \vphi_{t+k}\,$.
      }
  }
\textbf{Return:} $\vtheta_{Ln}$.
\caption{WETD weighted $n$-step TD.}
\label{alg:wetd}
\end{algorithm}

\subsection{Emphatic TD($n$) --- NETD}
\label{sec:netd}
We also investigate the use of off-policy $n$-step TD target and derive from scratch a new emphatic trace called {\em Emphatic TD($n$)}, abbreviated as {\em NETD}. Similar to ETD($\lambda$), NETD guarantees asymptotic stability in off-policy learning with linear value function approximation by ensuring that the asymptotic update matrix is positive definite (see the appendix for its derivation and stability analysis). 

Consider an $n$-step TD update (in the fixed update scheme). We define the NETD trace as 
\begin{align}
\label{eq:netd}    
    F^{(n)}_t = \prod_{i=1}^{n} (\gamma_{t-i+1}\rho_{t-i}) F^{(n)}_{t-n} + 1, 
\end{align}
where $F^{(n)}_0, F^{(n)}_1, \ldots, F^{(n)}_{n-1}=1$. We can apply this new trace to weight each $n$-step TD update to $\vtheta$, i.e.
\begin{align}
     \vtheta_{t+1} = \vtheta_t + \alpha F^{(n)}_t\sum_{i=t}^{t+n-1} (\prod_{j=t}^{i-1} \gamma_{j+1}\rho_j ) \hskip0.1cm \rho_i \delta_i(\vtheta_t) \vphi_t\,.
\end{align}

NETD accumulates every $n$ steps, making it a tamer trace than the WETD follow-on trace $F_t$ (see Prop.~\ref{prop}). For a concrete example, consider $\gamma\equiv 0.99$ and in the on-policy case, $\rho\equiv1$. For WETD, the fixed point of $F_t$ is $100$, whereas the fixed point of $F^{(n)}_t$ is $1/(1-0.99^n)$, which is $10.46$ for $n=10$, $3.84$ for $n=30$, and only $1.58$ for $n=100$. As a result, NETD can be more stable than WETD when large bootstrap lengths are used, which is common in practice. 

\begin{proposition}
\label{prop}
Assume $\rho_t>0$ and $\gamma_t>0$ for any time step $t$. Then we have $F_t > F^{(n)}_t$ for any $t>0$.
\end{proposition}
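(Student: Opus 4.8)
The plan is to turn both recursions into explicit closed forms and then compare them term by term. The natural building block is the positive ``path product'' collecting $k$ consecutive discount--IS factors ending at time $t$:
\[
T_{t,k} \doteq \prod_{m=1}^{k}\gamma_{t-m+1}\,\rho_{t-m}, \qquad T_{t,0}\doteq 1,
\]
which is strictly positive for every $0\le k\le t$ because the hypothesis gives $\gamma_s>0$ and $\rho_s>0$ at every step. The whole argument reduces to expressing $F_t$ and $F^{(n)}_t$ as sums of these $T_{t,k}$ and checking which ones appear.

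First I would unroll the follow-on trace \eqref{eq:followon}. Substituting $F_{t-1}$ into itself repeatedly, and using $F_0=1$ as the base case, the one-step recursion produces every path product, giving $F_t=\sum_{k=0}^{t}T_{t,k}$. Next I would unroll the NETD trace \eqref{eq:netd}. Writing $t=qn+r$ with $q=\lfloor t/n\rfloor$ and $0\le r<n$, the recursion terminates after $q$ applications at the base value $F^{(n)}_r=1$. The key computation is that the one-step coefficient of \eqref{eq:netd} equals $\prod_{i=1}^{n}(\gamma_{t-i+1}\rho_{t-i})=T_{t,n}$, and that the product of $j$ successive such coefficients telescopes to $T_{t,jn}$; this yields $F^{(n)}_t=\sum_{j=0}^{q}T_{t,jn}$.

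The crux is the structural observation that $F^{(n)}_t$ is a \emph{sub-sum} of $F_t$: its terms are exactly the $T_{t,k}$ whose index $k$ is a multiple of $n$ (and $k\le t$), whereas $F_t$ collects all indices $0\le k\le t$. Subtracting,
\[
F_t-F^{(n)}_t=\sum_{\substack{1\le k\le t\\ k\not\equiv 0\ (\mathrm{mod}\ n)}} T_{t,k},
\]
and since every summand is strictly positive, it suffices that this index set be nonempty. For $t\ge 1$ in the multi-step regime of interest ($n\ge 2$), the index $k=1$ satisfies $1\le t$ and $1\not\equiv 0\pmod n$, so the sum contains at least one positive term and $F_t>F^{(n)}_t$ follows.

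I expect the main obstacle to be purely the index bookkeeping in the second step: verifying that unrolling the $n$-step jump of \eqref{eq:netd} lands precisely on the ``every $n$-th'' subset $\{T_{t,jn}\}$ of the one-step expansion, and aligning the NETD base conditions $F^{(n)}_0=\cdots=F^{(n)}_{n-1}=1$ with $F_0=1$ so that no stray boundary term is mishandled. Once the two closed forms are in place, positivity makes the strict inequality immediate; I would also note explicitly that the strictness requires $n\ge 2$, since for $n=1$ the two recursions coincide and one only gets equality.
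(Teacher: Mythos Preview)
Your proof is correct and takes a genuinely different route from the paper's. The paper argues by induction in steps of $n$: it unrolls the one-step recursion for $F_t$ exactly $n$ times to obtain
\[
F_t=\prod_{i=1}^{n}(\gamma_{t-i+1}\rho_{t-i})\,F_{t-n}+\sum_{k=1}^{n-1}\prod_{j=1}^{k}(\gamma_{t+1-j}\rho_{t-j})+1,
\]
so that $F_t$ and $F^{(n)}_t$ share the same leading coefficient on $F_{t-n}$ and $F^{(n)}_{t-n}$, and the strict inequality follows from the inductive hypothesis $F_{t-n}\ge F^{(n)}_{t-n}$ together with the positivity of the extra sum. Your approach instead unrolls both recursions completely into closed forms and recognizes $F^{(n)}_t$ as the sub-sum of $F_t$ that retains only those $T_{t,k}$ with $k$ a multiple of $n$. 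This is more transparent: it identifies precisely which path products are dropped and gives the full quantitative gap $F_t-F^{(n)}_t=\sum_{k\not\equiv 0\,(n)}T_{t,k}$, of which the paper's extra sum is only the first block $1\le k\le n-1$. Your explicit remark that strictness requires $n\ge 2$ is also worth keeping, since for $n=1$ the two recursions coincide; the paper leaves this implicit (its sum $\sum_{k=1}^{n-1}$ is empty when $n=1$).
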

\begin{proof}
We prove this result by induction for every $n$ time steps. At the start, for $t=0$, we have $F_0 = F^{(n)}_0=1$. For $0<t\leq n$, since $\rho_t>0, \gamma_t>0$ and $F_0=1$, we know $F_t = \gamma_t \rho_{t-1} F_{t-1} + 1$ is always a positive number. Thus $F_t = \gamma_t \rho_{t-1} F_{t-1} + 1 > 1 = F^{(n)}_t$ since $\gamma_t \rho_{t-1} F_{t-1} > 0$. Now assume that $F_{t-n} > F^{(n)}_{t-n}$, we derive that $F_t > F^{(n)}_t$ as follows. Substituting Eq.~\eqref{eq:followon} $n$ times, we have
\begin{align}
    F_t &= \prod_{i=1}^{n} (\gamma_{t-i+1}\rho_{t-i}) F_{t-n} \\
    & \hspace{0.9cm} + \sum_{k=1}^{n-1} \prod_{j=1}^k (\gamma_{t+1-j} \rho_{t-j}) + 1 \\
    & \geq F^{(n)}_t + \sum_{k=1}^{n-1} \prod_{j=1}^k (\gamma_{t+1-j} \rho_{t-j}) > F^{(n)}_t.
\end{align}
Since $F_t > F^{(n)}_t$ for $t=0, 1,\ldots, n$, this is also true for $t=n+1, \ldots, 2n+1$ and so on for every time step $t$.
\end{proof}
Therefore $F_t$ used in WETD is a strict upper bound for $F^{(n)}_t$ in NETD. Moreover the difference between them $\sum_{k=1}^{n-1} \prod_{j=1}^k (\gamma_{t+1-j} \rho_{t-j})$ grows with $n$. Algo.~\ref{alg:netd} shows the pseudo-code of NETD weighted TD learning.

\begin{algorithm}[t]
\textbf{Input: Target policy $\pi$, behavior policy $\mu$, bootstrapping length $n$, gradient step size $\alpha$, discounts $\gamma_t$, state features $\vphi_t$.}\\
Initialize model parameters $\vtheta_0$. \\
Initialize $F^{(n)}_0, \ldots, F^{(n)}_{n-1}=1.$ \\
Sample trajectory $(S_i, A_i, R_{i+1})_{i=0}^{n-1} \sim \mu$.\\
 Set $\rho_i = \pi(A_i|S_i)/\mu(A_i|S_i)$, for $i=1, \ldots, n-1$.\\
\For{$t\in[0, \ldots, T]$}{
  Sample $S_{t+n}, A_{t+n}, R_{t+n+1} \sim \mu$\\
  Set $\rho_{t+n} = \pi(A_{t+n}|S_{t+n})/\mu(A_{t+n}|S_{t+n})$.\\
  \textbf{if} $t\geq n$ \textbf{then}\\
  Compute $F^{(n)}_t = \prod_{i=1}^{n}(\gamma_{t-i+1} \rho_{t-i}) F^{(n)}_{t-n} + 1.$\\
  \textbf{end}\\
  Update model parameters: \\
  $\vtheta_{t+1} = \vtheta_t + \alpha F^{(n)}_t\sum_{i=t}^{t+n-1} (\prod_{j=t}^{i-1} \gamma_{j+1}\rho_j ) \hskip0.1cm \rho_i \delta_i(\vtheta_t) \vphi_t\,.$
  }
\textbf{Return:} $\vtheta_T$.
\caption{NETD weighted $n$-step TD.}
\label{alg:netd}
\end{algorithm}

\subsection{Emphatic Variants}
\label{sec:variants}
In this section we derive novel emphatic updates based on on either the WETD or the NETD trace.

\begin{table*}[h!]
\begin{center}
\begin{tabular}{lcc|cccc}
&\multicolumn{2}{c}{Emphatic Trace Computation} &\multicolumn{4}{c}{Learning Target Computation}\\
\cline{2-3} \cline{4-7}
Algorithm &  transform $\rho$  & trace type & learning target & update scheme & clip $c$ & clip $\rho$\\
\toprule
$n$-step TD$^1$         &N/A &x      &$\pi^*$ & either &x  &x\\
NETD                     &x   &NETD   &$\pi^*$ & fixed  &x  &x\\
WETD                     &x   &WETD   &$\pi^*$ & mixed  &x  &x\\
Clip-NETD                &$\min(\bar{\rho}, \rho)$ &NETD  &unknown & fixed  &x  &x\\
Clip-WETD                &$\min(\bar{\rho}, \rho)$ &WETD  &unknown & mixed  &x  &x\\
V-trace$^1$              &N/A      &x     &$\pi_{\bar{\rho}}$ & either &\checkmark &\checkmark\\ 
NEVtrace                 &$\rho^v \doteq \pi_{\bar{\rho}}/\mu$ &NETD  &$\pi_{\bar{\rho}}$ & fixed  &\checkmark &\checkmark\\
WEVtrace                 &$\rho^v \doteq \pi_{\bar{\rho}}/\mu$ &WETD  &$\pi_{\bar{\rho}}$ & mixed  &\checkmark &\checkmark\\
\bottomrule
\end{tabular}
\end{center}
\caption{Look-up table for our emphatic algorithms and the two baseline algorithms without emphatic traces. $\pi^*$ is the optimal policy for $n$-step TD learning. $\pi_{\bar{\rho}}$ is the fixed point target policy of V-trace (Eq.~\ref{eq:vtrace_fp}). When applied to Surreal (explained in Sec.~\ref{sec:atari_exp}) for large scale experiments, we always clip IS weights in computing emphatic traces to reduce variance except for NEVtrace and WEVtrace.}
\label{tab:methods}
\end{table*}

\paragraph{Clipped Emphases}
To further reduce variance of the emphatic algorithms, we clip the IS weights used in computing WETD and NETD in Eq.~\ref{eq:wetd} \& \ref{eq:netd}, and keep the IS weights used in computing the learning update unchanged. We call this new emphatic trace {\em Clip-WETD} in the case of WETD,  
\begin{align}
\label{eq:clip_wetd}
    \bar{F}_t = \bar{\rho}_{t-1} \gamma_t \bar{F}_{t-1} + 1, 
\end{align}

and {\em Clip-NETD} in the case of NETD,
\begin{align}
\label{eq:clip_netd}    
    \bar{F}^{(n)}_t = \prod_{i=1}^{n}(\gamma_{t-i+1}\bar{\rho}_{t-i}) \bar{F}^{(n)}_{t-n} + 1.
\end{align}
Note that clipping reduces the growth of emphatic traces, but may introduce bias in the emphatic trace weighted learning updates.

\paragraph{Emphatic V-trace}
We can also use the emphatic traces WETD and NETD in combination with V-trace value target. In the case of WETD, we first adapt TD($\lambda$) to the mixed V-trace learning target with windows of length $n$ by defining the new $\lambda^v_t$ as
\begin{align}
\label{eq:vtrace_lambda}
    \lambda^v_t \doteq \begin{cases}
                       0, & \text{if $t\bmod{n}=0$}.\\
                       \bar{\rho}_t/\rho_t, & \text{otherwise}.
                     \end{cases}
\end{align}
where $\bar{\rho}_t = \min(\bar{\rho}, \rho_t)$ and $\bar{\rho}$ is the clipping threshold on IS weights of the learning target. Similar to the adaption to off-policy $n$-step TD target, this way the future TD errors not only stop affecting the update if they lie beyond the current window (due to $\lambda^v$ set to 0), but also the relevant TD errors are weighted according to the clipped IS weights as in the V-trace value target. The appendix contains a detailed analysis on why this recovers the mixed V-trace learning target.
We then adapt WETD to the V-trace learning target by adopting its target policy ($\pi_{\bar{\rho}}$ in Eq.~\ref{eq:vtrace_fp}) as the target policy of the emphatic trace. The IS ratios in computing $F^v_t$ are between the V-trace target policy $\pi_{\bar{\rho}}$ and the behavior policy $\mu$, i.e. $\rho_t^v = \pi_{\bar{\rho}}(A_t | S_t)/\mu(A_t | S_t)$, and
\begin{align}
\label{eq:wevtrace}
    F^v_t = \rho^v_{t-1} \gamma_t F^v_{t-1} + 1. 
\end{align}
We call this new emphatic trace the Windowed Emphatic Vtrace, abbreviated as {\em WEVtrace}.

In the case of NETD, we similarly extend it to the fixed V-trace target by replacing the IS weights in Eq~\ref{eq:netd} by $\rho_t^v$. We call it the $N$-step Emphatic V-trace, {\em NEVtrace}. 
\begin{align}
\label{eq:nevtrace}    
    F^{(n),v}_t = \prod_{i=1}^{n}(\gamma_{t-i+1} \rho^v_{t-i}) F^{(n),v}_{t-n} + 1.
\end{align}
See the appendix for derivation details. Notice that NEVtrace and WEVtrace are likely to have higher variances than Clip-NETD and Clip-WETD (see the inequality below for any time step $t>0$). Though they are the correct emphatic traces w.r.t. to the V-trace target, in practice they often perform worse than the clipped emphatic traces when applied to the V-trace target.
\begin{align}
    \rho^v_t &= \pi_{\bar{\rho}}(A_t | S_t)/\mu(A_t | S_t) \\
    &= \frac{\min (\bar{\rho}, \pi(A_t|S_t)/\mu(A_t|S_t))}{\sum_{a'\in \mathcal{A}}\min (\bar{\rho}\mu(a'|A_t), \pi(a'|S_t))} \\
    &=  \frac{\bar{\rho}_t}{\sum_{a'\in \mathcal{A}}\min (\bar{\rho}\mu(a'|S_t), \pi(a'|S_t))} \geq  \bar{\rho}_t.
\end{align}

Table~\ref{tab:methods} lists all the emphatic algorithms and the three baseline learning algorithms with their respective variations in learning updates and emphatic trace computation. 

\paragraph{Emphatic Actor-critics}
Actor critic agents reportedly can suffer more from off-policy learning than value-based agents, which is one of the main reasons we choose to focus on V-trace in this paper. We can combine the emphatic traces derived above with the off-policy $n$-step TD or the V-trace value targets, and apply these to actor critic by simply applying emphatic traces to both the value estimate gradient and the policy gradient in for example, the V-trace learning update, following existing work on ACE \cite{imani2018}. We name these new emphatic algorithms after the emphatic trace used, which can be any of, e.g. NETD, WETD, Clip-WETD, Clip-NETD, NEVtrace, WEVtrace.  We add an `-ACE' suffix to indicate when the same emphatic trace is applied not just to the value update, but also to the policy gradient update.

\section{Diagnostic Experiments}
\label{sec:mdp_exp}

\begin{figure}[t]
\centering
\vskip -0.1cm
\begin{tikzpicture}[dgraph]
\node[circ] (s1) at (0, 1) {$\theta$};
\node[circ] (s2) at (1.5, 1) {$2\theta$};

\draw[dashed](s2.210)to[out=210, in=-30,looseness=1] (s1.-30);
\draw[dashed](s1.150)to[out=150, in=210,looseness=6] (s1.210);
\draw[dashed](s2.-15)to[out=-15, in=15,looseness=8] (s2.15);
\draw[dashed](s1)--(s2);
\draw[](s1.30)to[out=30, in=150,looseness=1] (s2.150);
\draw[](s2.-30)to[out=-30, in=30,looseness=8] (s2.30);
\end{tikzpicture}
\vspace{-0.5cm}
\caption{A simple two State MDP. The solid lines depict the (deterministic) target policy $\pi($right$|\cdot)=1$. Dashed lines denote the (more exploratory) behavior policy $\mu$. The behavior policy selects any of the actions with equal probability $\mu($right$|\cdot)=\mu($left$|\cdot)=0.5$, in all states. The rewards are zero everywhere.}
\label{fig:2state_mdp}
\end{figure}
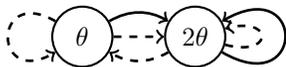

\begin{figure}[t]
\centering
\vskip -0.1cm
\resizebox{0.8\columnwidth}{!}{
\begin{tikzpicture}[dgraph]
\node[circ] (s1) at (1.6,0.8) {$S_1$};
\node[circ] (s2) at (2.9,0.8) {$S_2$};
\node[circ] (s3) at (4.2,0.8) {$S_3$};
\node[circ] (s4) at (5.5,0.8) {$S_4$};
\node[circ] (s5) at (5.5,-1) {$S_5$};
\node[circ] (s6) at (4.2,-1) {$S_6$};
\node[circ] (s7) at (2.9,-1) {$S_7$};
\node[circ] (s8) at (1.6,-1) {$S_8$};
\node[circ] (s9) at (0.3,-1) {$S_9$};

\draw[](s1.30)--(s2.150);
\draw[](s2.30)--(s3.150);
\draw[](s3.30)--(s4.150);
\draw[](s4.east)to[out=0, in=30,looseness=1] (s5.30);
\draw[](s5.150)--(s6.30);
\draw[](s6.150)--(s7.30);
\draw[](s7.150)--(s8.30);
\draw[](s8.150)--(s9.30);
\draw[](s9.-150)to[out=-150, in=150,looseness=8] (s9.150);

\draw[dashed](s1.-30)--(s2.-150);
\draw[dashed](s2.-30)--(s3.-150);
\draw[dashed](s3.-30)--(s4.-150);
\draw[dashed](s4.-30)to[out=-30, in=60,looseness=1] (s5.60);
\draw[dashed](s5.-150)--(s6.-30);
\draw[dashed](s6.-150)--(s7.-30);
\draw[dashed](s7.-150)--(s8.-30);
\draw[dashed](s8.-150)--(s9.-30);
\draw[dashed](s9.-165)to[out=-165, in=165,looseness=8] (s9.165);

\node[input] (ss) at (0.3, 0.8) {Start Area};
\node[system,fit=(ss) (s1) (s2) (s3) (s4)] {};
\node[null] (i5) at (5.5,0.1) {};
\node[null] (i6) at (4.2,0.1) {};
\node[null] (i7) at (2.9,0.1) {};
\node[null] (i8) at (1.6,0.1) {};
\draw[dashed](s5.north)--(i5);
\draw[dashed](s6.north)--(i6);
\draw[dashed](s7.north)--(i7);
\draw[dashed](s8.north)--(i8);
\end{tikzpicture}}
\vskip -0.3cm
\caption{Collision Problem. Solid lines depict target policy $\pi($forward$|\cdot)=1$. Dashed lines indicate the behavior policy $\mu($forward$|x\in\mathcal{S}_{\text{start}}\cup S_9)=1, \mu($forward$|x\in\mathcal{S}_{\text{next}})=0.5$, where  $\mathcal{S}_{\text{start}}=\{S_1, S_2, S_3, S_4\}$ and $\mathcal{S}_{\text{next}}=\{S_5, S_6, S_7, S_8\}$. On a {\em retreat} action, the agent goes back to a random state in  $\mathcal{S}_{\text{start}}$. }
\label{fig:collision_mdp}
\end{figure}
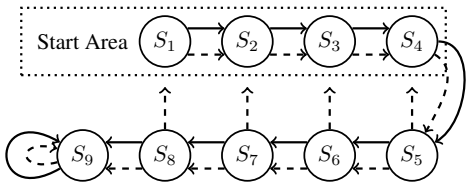

We empirically analyze the properties of these new emphatic algorithms to observe how qualitative properties such as convergence, learning speed and variance manifest in practice. We examine these in the context of two small scale diagnostic off-policy policy evaluation problems: (1) a two-state MDP, shown in Figure~\ref{fig:2state_mdp}, commonly used to highlight the instability of off-policy TD with function approximation, and (2) the Collision Problem, shown in Figure ~\ref{fig:collision_mdp}, used in prior work to highlight the advantages of ETD compared with gradient TD methods such as TDC~\cite{ghiassian2018online}.  In both cases we use linear function approximation, with a feature representation that includes significant generalization. In the appendix we also report experiments with Baird's counterexample ~\cite{baird1995}.

The results that follow are produced by extensive sweeps over the key hyper-parameters: we tested all combinations of the learning rate $\alpha \in\{2^i~|~ i\in -14,-13,...,-2\}$ and bootstrap length $n \in \{1,2...,5\}$; we selected the best hyper-parameters for each method by computing the RMSE over all time steps, and averaging results over many independent replications of the experiment---50 runs for the two-state MDP and 200 for the Collision problem. We then report both learning curves---plotting the RMSE over time for the best hyper-parameter setting from the sweep, and parameter studies---showing the average total RMSE for each algorithm, and for each combination of $n$, and $\alpha$. 

\begin{figure}[t]
\centering
\includegraphics[width=0.8\columnwidth]{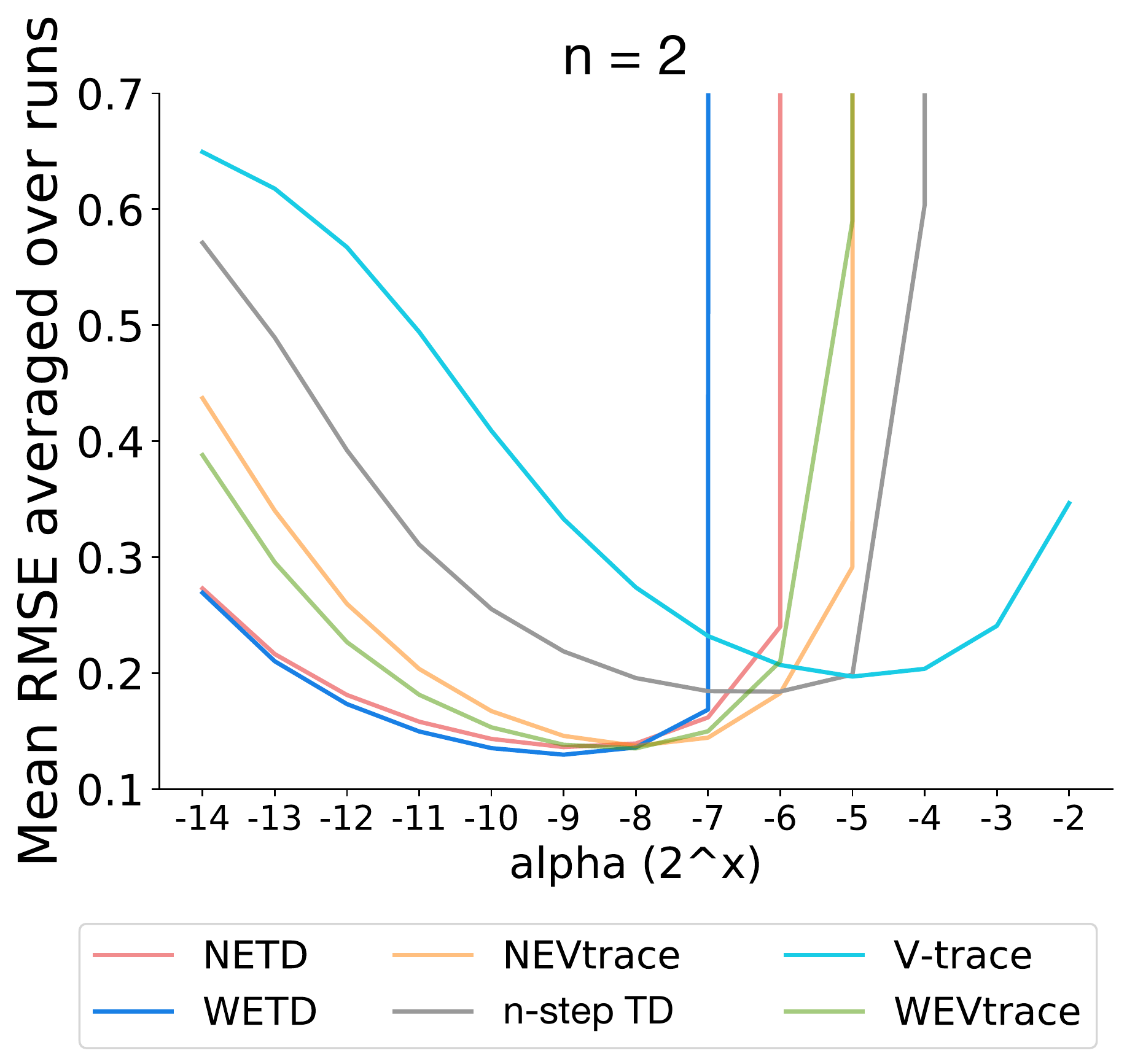}
\caption{Hyper-parameter sensitivity comparison on the Collision problem. Each data point in the plot shows the mean RMSE averaged over 200 runs for different values of learning rate $\alpha$. The emphatic algorithms achieve best performance in this task, and do so with smaller step-sizes than V-trace and n-step TD; consistent with previous results on this task \cite{ghiassian2018online}}
\vspace{1.2cm}
\label{fig:collision_u_curve}
\end{figure}

\subsection{Two-state MDP}

\begin{figure}[t]
\centering
\subfigure[$n$-step TD target]{
\includegraphics[width=\columnwidth]{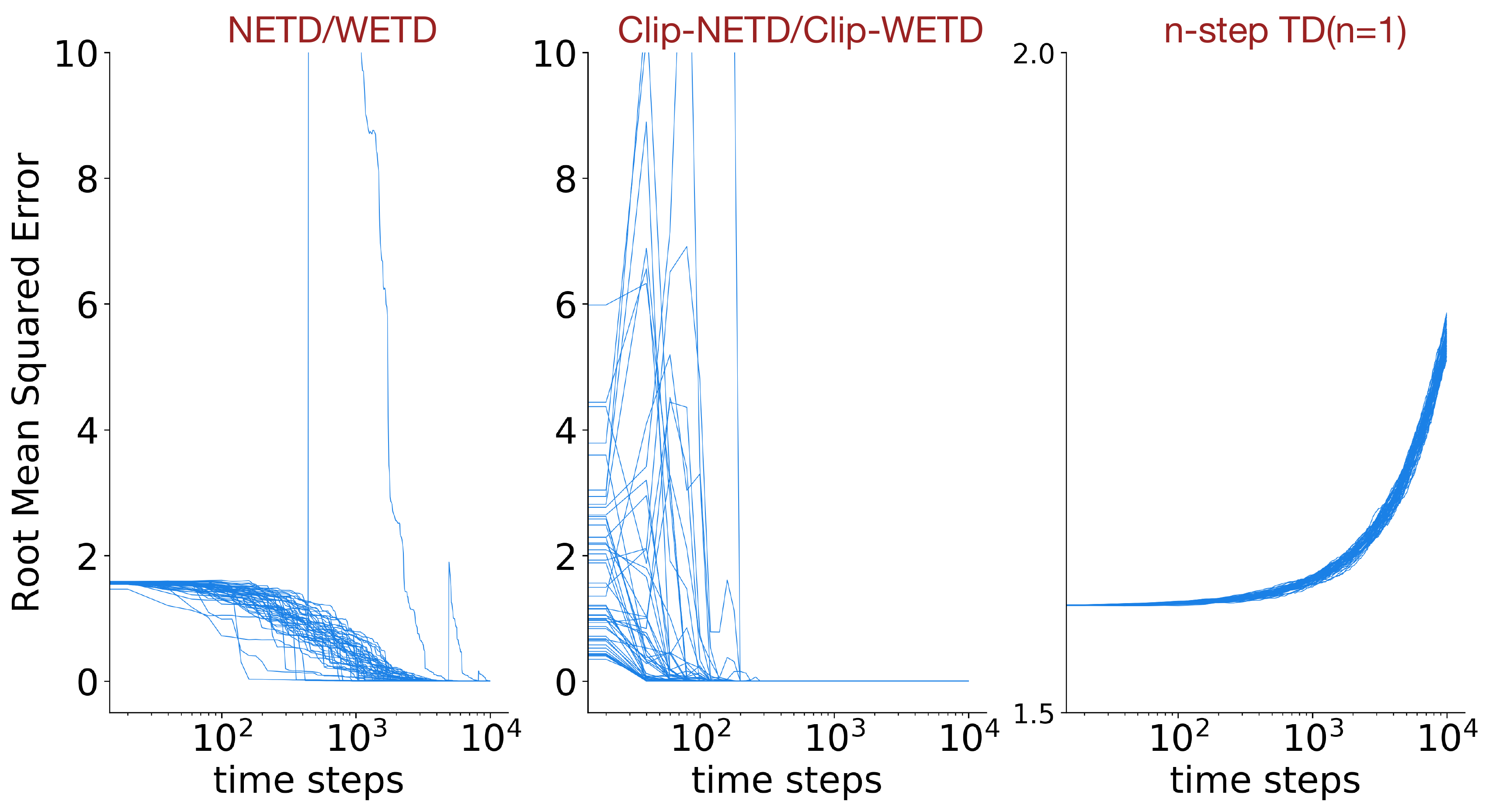}
\label{fig:2state_3_algs}
}
\subfigure[V-trace target]{
\includegraphics[width=\columnwidth]{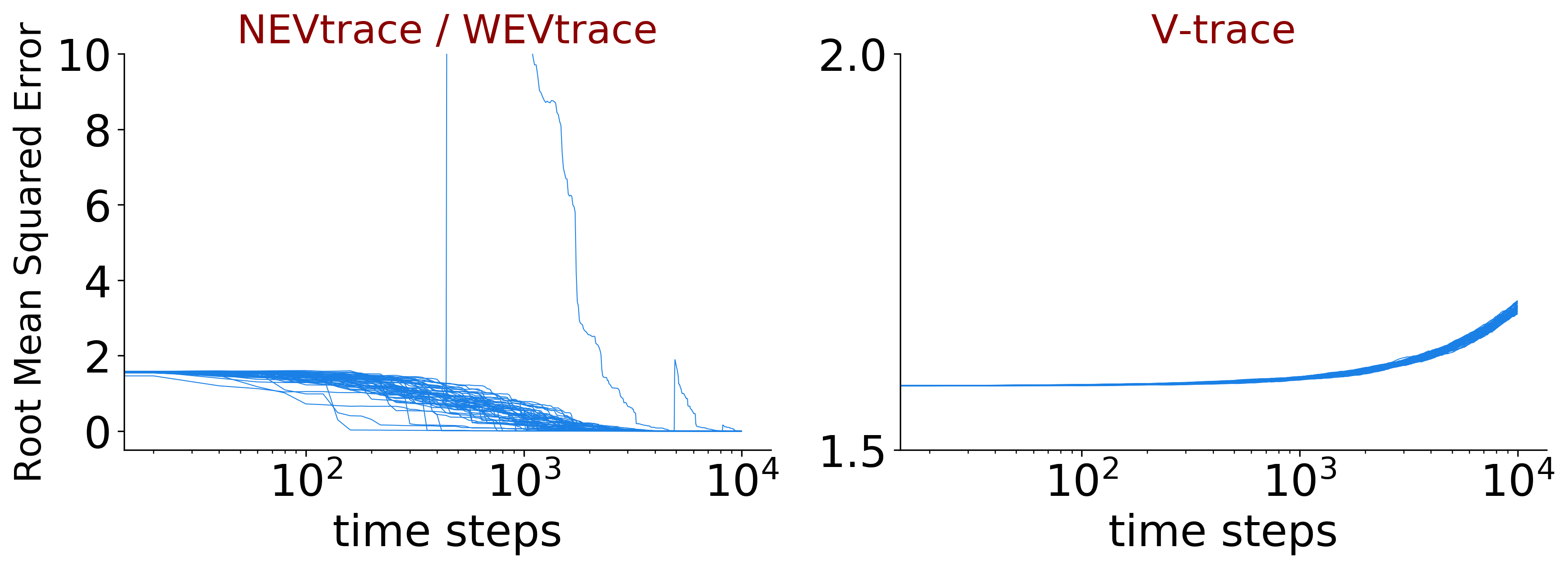}
\label{fig:2state_2_algs}
}
\vspace{-0.2cm}
\caption{Stability and learning speed in the two-state problem (with $n=1$ and 50 runs). Each plot reports the Root Mean Squared Error for all random seeds, using the best learning rate $\alpha$ for each method. 
\subref{fig:2state_3_algs}
TD(0) slowly diverged; NETD learned slowly and exhibited significant instability, even late in learning. Clip-NETD learned quickly and exhibits no instability beyond a few initial fluctuations. 
\subref{fig:2state_2_algs}
All runs of V-trace diverged, regardless of $\alpha$; NEVtrace did converge, but exhibited instability in some runs. Note the log scale on x-axis.}
\vspace{-0.5cm}
\end{figure}

\begin{figure}[t]
\centering
\includegraphics[width=\columnwidth]{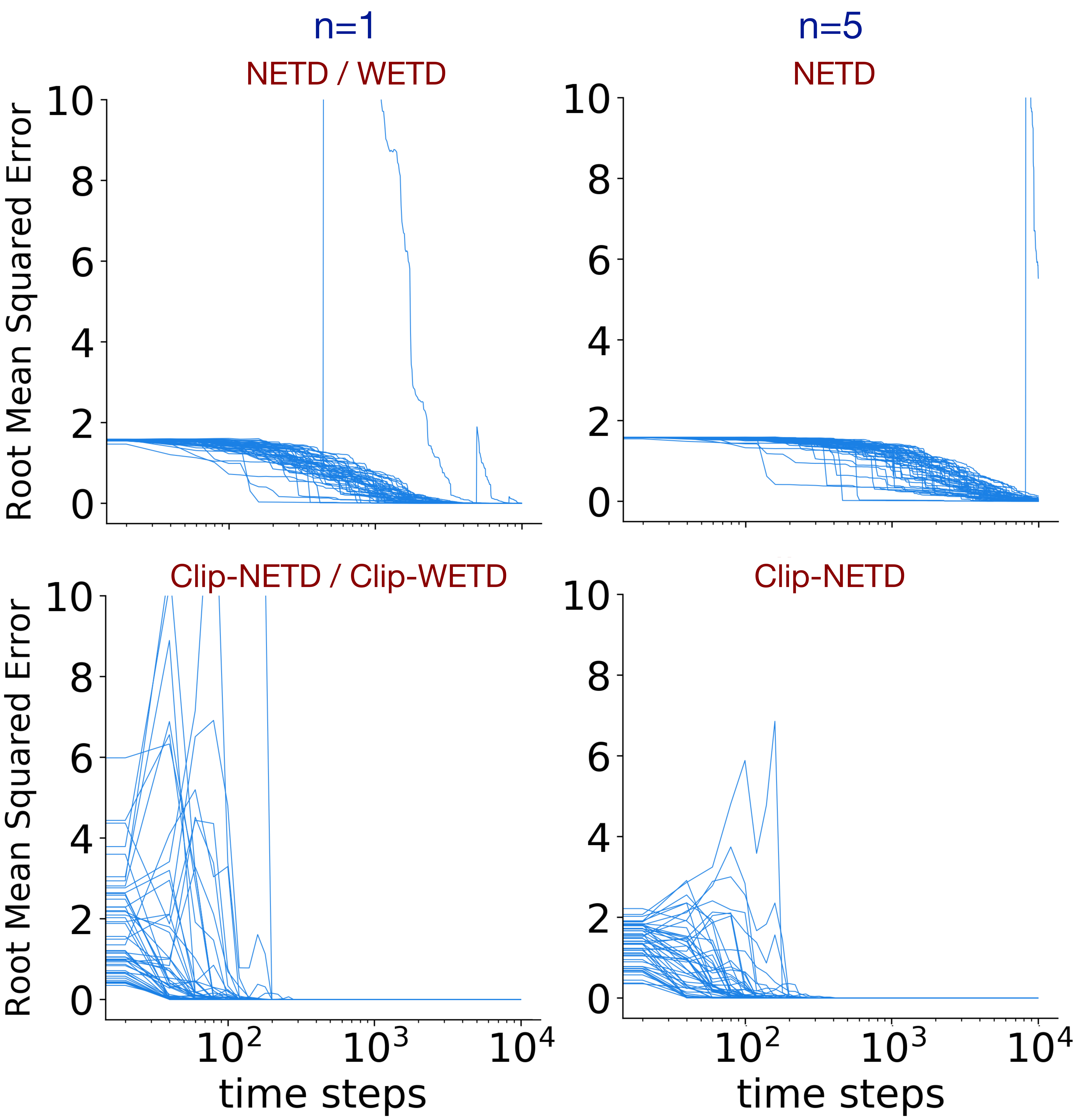}
\caption{Variance reduction due to clipping the IS weights in emphatic trace computation. Left column: $n=1$, right column: $n=5$, both with 50 runs. Clip-NETD (Eq.~\ref{eq:clip_netd}) and Clip-WETD (Eq.~\ref{eq:clip_wetd}) exhibit faster learning with no instability issues compared with the non-clipped version NETD / WETD. As $n$ gets larger, NETD becomes more stable but still exhibits spikes late in learning, whereas Clip-NETD remains stable and improves in initial learning. Note the log scale on x-axis.
}
\vspace{0.3cm}
\label{fig:2state_clip}
\end{figure}

This two-state MDP, illustrated in Fig~\ref{fig:2state_mdp}, is classical off-policy policy evaluation problem. Training data is generated by a random walk behavior policy. The task is to evaluate the target policy that always goes right, and ends up stuck in the second state forever. The values for the two states are approximated as $\theta$ and $2\theta$ with state features being scalars 1 and 2. The discount $\gamma$ is 0.9. Rewards are zero everywhere.

First we examine the convergence properties of various methods using the 1-step TD, a.k.a. TD(0) learning target. Empirically and theoretically, smaller bootstrap lengths $n$ are more likely to induce divergence in learning (further analysis and empirical evidence is in the appendix). Notice that the emphatic traces NETD and WETD are equivalent when applied to TD(0) (compare the formula of NETD in Eq.~\ref{eq:netd} with that of WETD in Eq.~\ref{eq:wetd} when $n=1$).

Figure~\ref{fig:2state_3_algs} presents Root Mean Squared Error (RMSE) over training time for NETD (WETD) and Clip-NETD (Clip-WETD), with the baseline learner TD(0) in the bottom panel. TD(0) diverged faster as training goes on. NETD converged slowly and exhibited significant instability: occasionally runs diverged even late into training. Clip-NETD by comparison learned quickly and exhibited low variance with no instability after some initial fluctuations. Since this small diagnostic MDP was designed to induce instability in learning, the initial fluctuations are expected due to the adversarial initialization of the value function parameters. Note we plot the error in the value estimates, but the algorithms are not directly optimizing value error. This is similar to previous results of \citet{sutton2018, sutton2016emphatic}, and like previous works, we plot all individual runs in order to highlight any instability in training. Overall, clipping IS weights proved to be an effective way of variance reduction at any bootstrap length $n$. Figure~\ref{fig:2state_clip} shows an example of the variance reduction effect for both $n=1$ (left column) and $n=5$ (right column). In both cases, Clip-NETD learned faster with no instability issues. 

Figure~\ref{fig:2state_2_algs} compares NEVtrace (or WEVtrace) to the corresponding V-trace baseline at $n=1$, that is equivalently, TD(0) with clipped IS weights. While V-trace diverged, NEVtrace converged slowly, although with instability issues and occasional spikes in error late in training in a subset of the runs. 

The full set of experiment results on the two-state MDP for all algorithms listed in Table~\ref{tab:methods} are included in the appendix, as well as results on the Baird's MDP with $n=1$ and $n=5$, which yield similar conclusions.

\subsection{Collision Problem}
In the Collision Problem (illustrated in Fig.~\ref{fig:collision_mdp}), states are aligned in a hallway and the agent can move forward or retreat. Episodes begin in one of the first four states, and terminates after $100$ time steps. The reward is zero on every transition, except on the transition into the last state $S_9$. The behavior policy always moves forward in the starting states, and outside of this area, either moves forward or retreats to the starting states with equal probability, except in the last trapping state $S_9$. The target policy moves forward in every state. We examine the emphatic algorithms in this environment through a hyper-parameter sensitivity study on the learning rate $\alpha$ and bootstrap length $n$. Figure~\ref{fig:collision_u_curve} presents the mean RMSE averaged across 200 runs of the emphatic algorithms and baselines $n$-step TD and V-trace at $n=2$, varying the learning rate $\alpha$. Emphatic algorithms achieved best performance---with best performance using smaller learning rates compared to the two baselines---consistent with previous results on ETD($\lambda$) in this task \cite{ghiassian2018online}. Additional training curves and hyper-parameter study plots for $n=1,2,3,5$ are in the appendix, supporting the same conclusion.

\section{Experiments at Scale}
\label{sec:atari_exp}

Our ultimate goal is to design emphatic algorithms that improve off-policy learning at scale, especially on actor-critic agents. Thus we evaluated the emphatic algorithms on Atari games from the Arcade Learning Environment \cite{bellemare2013}, a widely used deep RL benchmark. 

\paragraph{Data} We use the raw pixel observations in RGB as they are provided by the environment, without down sampling or gray scaling them. We also use an action repeat of $4$, with max pooling over the last two frames and the life termination signal. This setup is similar to IMPALA \cite{espeholt2018} with the only difference being using the raw frames instead of down and gray scaled ones. In order to compare with closely related previous works, we adopted the conventional 200M frames training regime using online updates without experience replay. 

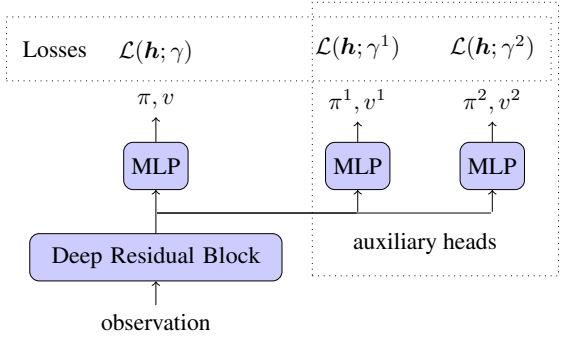
\begin{figure}[t]
\centering
\resizebox{0.9\columnwidth}{!}{
\begin{tikzpicture}
\node (resblock) [mlp] {Deep Residual Block};
\path (resblock.south)+(0, -0.6) node (obs) [inp] {observation};
\path (resblock.north)+(0, 1) node (mlp0) [op] {MLP};
\path (resblock.north)+(3, 1) node (mlp1) [op] {MLP};
\path (resblock.north)+(5, 1) node (mlp2) [op] {MLP};
\path (mlp0.north)+(0, 0.6) node (out0) [inp] {$\pi, v$};
\path (mlp1.north)+(0, 0.6) node (out1) [inp] {$\pi^1, v^1$};
\path (mlp2.north)+(0, 0.6) node (out2) [inp] {$\pi^2, v^2$};
\path (out0.north)+(0, 0.5) node (loss0) [inp] {$\mathcal{L}(\vh;\gamma)$};
\path (out1.north)+(0, 0.5) node (loss1) [inp] {$\mathcal{L}(\vh;\gamma^1)$};
\path (out2.north)+(0, 0.5) node (loss2) [inp] {$\mathcal{L}(\vh;\gamma^2)$};
\node[input] (l) at (-1.5, 3.1) {Losses};
\node[system,fit=(loss0) (loss1) (loss2) (l)] {};
\path (resblock.north)+(0, 0.3) node (null0) [null] {};
\path (null0)+(3, 0) node (null1) [null] {};
\path (null0)+(5, 0) node (null2) [null] {};
\path (null2)+(0, 3.) node (null_p1) [null] {};
\path (null_p1)+(0.85, 0) node (null_p2) [null] {};
\node[input] (aux) at (4, 0.2) {auxiliary heads};
\node[system,fit=(mlp1) (mlp2) (out1) (out2) (aux) (null1) (null2) (null_p1) (null_p2)] {};

\path [draw, ->] (obs.north) -- node [above] {} (resblock.south);
\path [draw, ->] (resblock.north) -- node [above] {} (mlp0.south);
\path [draw, -] (null0) -- node [above] {} (null1);
\path [draw, -] (null0) -- node [above] {} (null2);
\path [draw, ->] (null1) -- node [above] {} (mlp1.south);
\path [draw, ->] (null2) -- node [above] {} (mlp2.south);
\path [draw, ->] (mlp0.north) -- node [above] {} (out0.south);
\path [draw, ->] (mlp1.north) -- node [above] {} (out1.south);
\path [draw, ->] (mlp2.north) -- node [above] {} (out2.south);

\end{tikzpicture}}
\caption{Block diagram of Surreal, with one main head and two auxiliary heads. IMPALA loss on each head uses different discounts $\gamma, \gamma^1, \gamma^2$. Let $\vh$ denote the neural network model. The behavior policy is fixed to be $\pi$.}
\label{fig:surreal}
\end{figure}

\begin{algorithm}[t]
\textbf{Input: Bootstrapping length $n$, discounts $\gamma, \gamma^1, \gamma^2$, number of actors $M$.}\\
Initialize Surreal neural network function $\vh_0$, \\
Output initial policy for the main head $\pi_0$ from $\vh_0$. \\
\For{actor $m\in[1, \ldots, M]$}{
  Sample trajectory $(S^m_i, A^m_i, R^m_{i+1})_{i=0}^{n-1} \sim \pi_0$.\\
  \For{auxiliary head $u=1, 2$}{
    Initialize $F^{(n),m,u}_0, \ldots, F^{(n),m,u}_{n-1}=1$.\\
    \For{$i=0, \ldots, n-1$}{
        Set $\bar{\rho}^{m, u}_i = \min(1, \frac{\pi^u_0(A^m_i|S^m_i)}{\pi_0(A^m_i|S^m_i)})$.
    }
  }
}
\For{timestep $t\in[0, \ldots, T]$}{
  Output main head policy $\pi_t$ from neural network $\vh_t$.
  \For{actor $m\in[1,\ldots,M]$}{
    Sample $S^m_{t+n}, A^m_{t+n}, R^m_{t+n+1} \sim \pi_t$.\\
    Compute the main head IMPALA loss $\mathcal{L}^m_t(\vh;\gamma)$.\\
    \For{auxiliary head $u=1, 2$}{
       Set $\bar{\rho}^{m, u}_{t+n} = \min(1, \frac{\pi^u_t(A^m_{t+n}|S^m_{t+n})}{\pi_t(A^m_{t+n}|S^m_{t+n})})$.\\
       \textbf{if} $t\geq n$ \textbf{then}\\
       $F^{(n),m,u}_t = \prod_{i=1}^{n}(\gamma^u_{t-i+1} \bar{\rho}^{m, u}_{t-i}) F^{(n),m,u}_{t-n} + 1$,\\
       \textbf{end}\\
       Weight the sum of IMPALA value and policy losses, plus IMPALA entropy loss:\\
       $\mathcal{E}^{m,u}_t =F^{(n),m,u}_t \mathcal{L}^{m,u}_t(\vh;\gamma^u) + \mathcal{H}^{m,u}_t$.\\
    }
  }
  Update neural network $\vh_{t+1}$ using an average loss: \\
  $\mathcal{L}_t = \frac{1}{3M} \sum_m (\mathcal{L}^m_t + \mathcal{E}^{m,1}_t+\mathcal{E}^{m,2}_t).$\\
}
\textbf{Return:} $h_T$.
\caption{NETD-ACE Surreal.}
\label{alg:netd_surreal}
\end{algorithm}

\paragraph{Agent} StacX \cite{zahavy2020} and UNREAL \cite{jaderberg2016} are both IMPALA-based agents that learn auxiliary tasks from experience generated by the main policy, in order to improve the shared representation. Inspired by their results, we investigated whether emphatic algorithms can help learn the auxiliary tasks better since they are learned off-policy, and in turn improve the agent performance. In particular, we used an IMPALA-based agent with two auxiliary heads, each head learning a different target policy for its own discount $\gamma, \gamma^1, \gamma^2$ (see Fig.~\ref{fig:surreal} and the appendix for details on its network structures and hyper-parameters). We call this agent {\em Surreal} as it fantasizes (learns off-policy) about two additional policies $\pi^1$, $\pi^2$ that discount the future rewards differently, without ever executing actions from them. We apply emphatic traces to the IMPALA learning updates on the two auxiliary heads. In order to reduce variance, we always clip the IS weights at 1 both in computing emphatic traces and in the V-trace target. In a distributed system, we keep track of an emphatic trace for each actor's trajectories and aggregate the updates in a batch average at every time step. Algo.~\ref{alg:netd_surreal} outlines the pseudo-code for NETD-ACE Surreal as an example. For the implementation of Surreal, we used Jax libraries \cite{rlax2020, haiku2020, optax2020} on a TPU Pod infrastructure called Sebulba \citep{sebulba2021}.

\paragraph{Evaluation} We compute the median human normalized scores across 57 games, averaged over seeds and an {\em evaluation phase} without learning. To compare any two agents, we view their scores on 57 games as 57 independent pairs of samples, similar to how one would test significance of a medical treatment on a population of different people, rather than testing same treatment on the same person multiple times. The $p$-value is the probability of the null hypothesis that the algorithm performs equally using the {\em sign test} \citep{Arbuthnot1712}. Results might be thought of as statistically significant when $p<0.05$.

\paragraph{Baselines} Prior to applying emphatic traces, we found the best hyper-parameters for Surreal in the mixed and the fixed update schemes separately as our baselines. In the mixed update scheme, $n=40, \alpha=6\cdot10^{-4}, \text{max gradient norm}=0.3$ yielded the best results. In the fixed update scheme, the best hyper-parameters for Surreal were $n=10, \alpha=2\cdot10^{-4}, \text{max gradient norm}=1$. 

\paragraph{Emphatic Results} Since the emphatic traces are derived using the steady state distributions following fixed policies, we expect that they would impact the results more towards the end of learning, as the agent stabilizes its learned policy with learning rate decay. Empirically we observed the differences between algorithms start to show around $130$M frames or $65\%$ of learning frames.

\begin{figure}[t]
\centering
\includegraphics[width=\columnwidth]{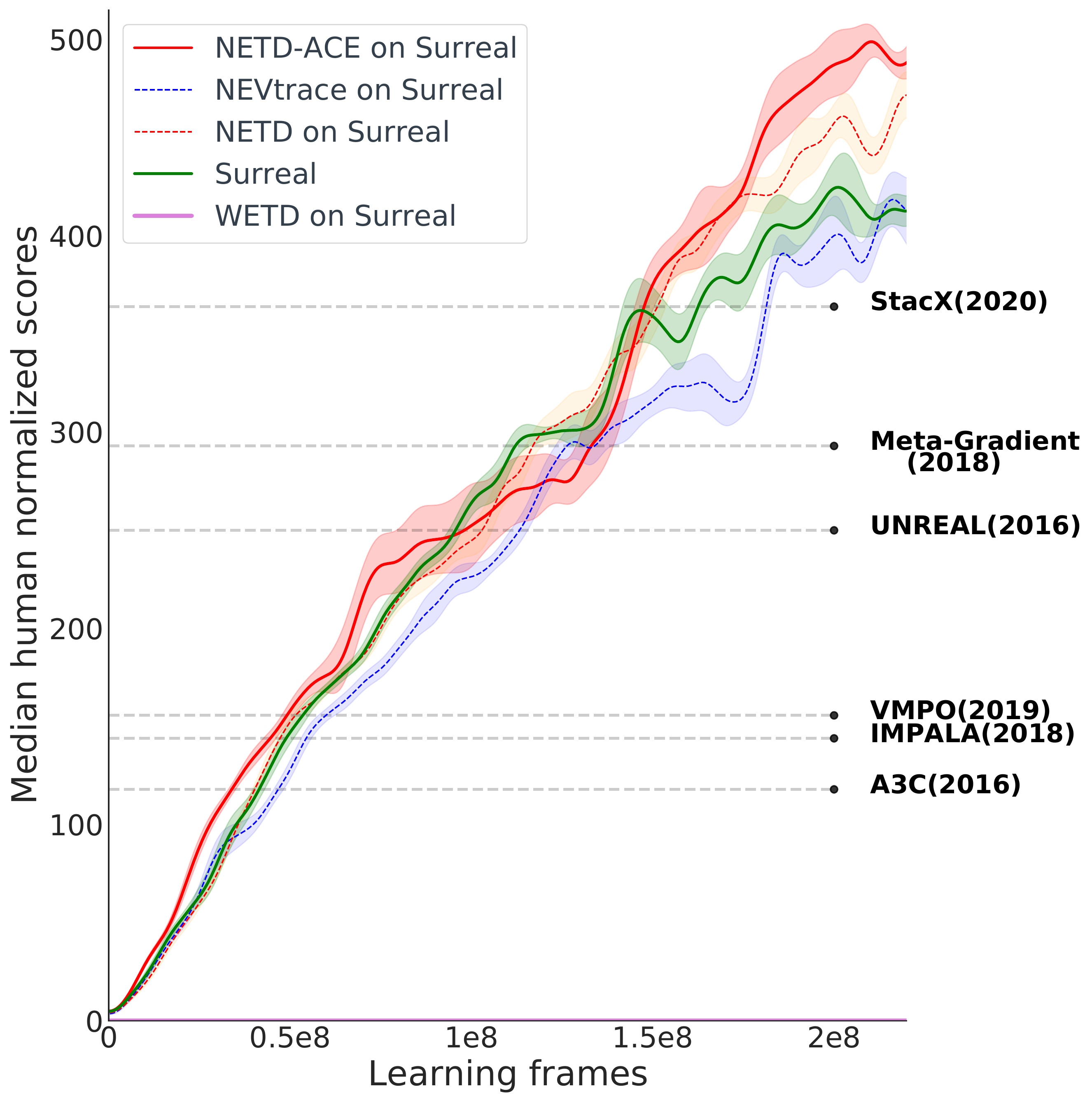}
\caption{Learning curves of baseline Surreal and the emphatic traces (NETD, NETD-ACE, NEVtrace, WETD) applied to Surreal, in the fixed update scheme with $n=10$, IS weights clipped to 1. Median human normalized scores are averaged across 3 random seeds with shaded areas denoting standard derivations.}
\vspace{0.5cm}
\label{fig:surreal_fixed_n_3}
\end{figure}

In the mixed update scheme, we tested emphatic trace family WETD and its variants WETD-ACE, WEVtrace applied to the Surreal baseline. In order to reduce instability, we experimented with several variance reduction techniques, including: 1) ETD($\lambda, \beta$), 2) interpolation, and 3) clipping. First, previous work on ETD($\lambda, \beta$)\cite{hallak2016} suggested using a hyper-parameter $\beta$ to replace the discount variable $\gamma_t$ in the follow-on trace, which we applied to WETD.
Second, we introduced a constant hyper-parameter $\eta\in(0,1)$ such that $M^w_t = 1 - \eta(1 - \lambda_t) + \eta(1 - \lambda_t) F_t$, allowing us to modify the interpolation between a potentially large $F_t$ and 1 to restrain the blow-up.
Third, we clipped the values of $\rho_{t-1}$ and/or directly clipped the values of $F_t$. However, despite of these efforts, WETD still diverged with exploding gradients (see WETD in Fig.~\ref{fig:surreal_fixed_n_3}).  

Next, we evaluated emphatic trace family NETD (dashed orange) and its variants NETD-ACE (solid red), NEVtrace (dashed blue), applied to Surreal, along with the baseline Surreal agent (solid green) using a fixed update scheme. The best Surreal baseline from our sweeps already surpassed the StacX scores \cite{zahavy2020}. The results are averaged over 3 random seeds, and Fig.~\ref{fig:surreal_fixed_n_3} depicts the learning curves and Table~\ref{tab:atari_fixed} summarizes all performance statistics. In particular, the best performing emphatic actor-critic agent NETD-ACE improved the median human normalized score from the baseline performance of $403\%$ to $497\%$, the highest score for an RL agent without experience replay in the 200M frames data regime. It improved performance on $100$ out of $57 \times 3$ Atari games compared to the baseline, with a p-value of $0.016$, achieving $95\%$ statistical significance.

\begin{table}[t]
\begin{center}
\scalebox{0.8}{%
\begin{tabular}{l|cccc}
Statistics & NETD-ACE & NETD & NEVtrace & Surreal\\
\toprule
Median                    &\textbf{497.21}  &427.69   &317.50   &403.47\\
Mean                      &\textbf{1793.47} &1507.85  &1502.84  &1565.42\\
$40$th percentile          &\textbf{300.26}  &268.43   &186.95   &258.35\\
$30$th percentile           &162.91  &\textbf{169.42}   &118.27   &163.90\\
$20$th percentile           &\textbf{74.47}   &70.74    &28.28    &65.60\\
$10$th percentile           &4.88    &4.3      &4.91     &4.25\\
$\#$ games \textgreater human  &\textbf{45}/57   &\textbf{45}/57    &43/57    &43/57\\
\bottomrule
\end{tabular}}
\end{center}
\caption{Performance statistics for baseline Surreal and emphatic traces applied to Surreal in the fixed update scheme with $n=10$, on 57 Atari games. Scores are human normalized, averaged across 3 random seeds and across the evaluation phase.}
\label{tab:atari_fixed}
\end{table}

\section{Discussion}

New emphatic algorithm families of WETD and NETD variants showed nice qualitative properties on off-policy diagnostic MDPs. For both families, clipping IS weights in computing emphatic traces turns out to be an effective way to reduce variance, so we applied this learning at scale. On Atari, we proposed a baseline agent Surreal that achieved a strong median human normalized score $403\%$, and is suitable for testing off-policy learning on auxiliary controls. The WETD family were unstable at scale, whereas the NETD family performed well, particularly the emphatic actor-critic agent NETD-ACE. For future work, we would like to investigate applying emphatic traces to a variety of off-policy learning targets and settings at scale. 

\newpage
\bibliography{emphatic}
\bibliographystyle{icml2021}

\appendix
\newpage
\onecolumn

\icmltitle{Supplementary Material}
\section{Stability in Off-policy $n$-step TD Learning}
In the context of off-policy $n$-step TD learning, we discuss the possible occurrence of unstable learning in general as well as characterize a safety region in the policy space that guarantees stable learning.

\subsection{Unstable Learning}
\label{app:unstable_tdn}
Following the same notations as in the main paper, let state value functions be approximated by a linear function approximation $\vtheta_t^T \vphi(S_t)$ where $\vphi$ are feature maps. For the $n$-step TD target, the value function update on $\vtheta_t$ is
\begin{align}
    \vtheta_{t+1} &= \vtheta_t + \alpha \sum_{i=t}^{t+n-1} \prod_{j=t}^{i-1}(\gamma_{j+1}\rho_j) \rho_i (R_{i+1} + \gamma_{i+1} \vtheta_t^T \vphi(S_{i+1}) - \vtheta_t^T \vphi(S_i) ) \vphi(S_t)\\
    &=\vtheta_t + \alpha\left(\underbrace{\sum_{i=t}^{t+n-1} \prod_{j=t}^{i-1}(\gamma_{j+1}\rho_j)\rho_i R_{i+1} \vphi(S_t)}_{\vb_t} - \underbrace{\vphi(S_t) \sum_{i=t}^{t+n-1} \prod_{j=t}^{i-1}(\gamma_{j+1}\rho_j)\rho_i \left[\vphi(S_i) - \gamma_{i+1} \vphi(S_{i+1})\right]^T}_{\mA_t} \vtheta_t  \right)\\
    &=\vtheta_t + \alpha(\vb_t - \mA_t\vtheta_t) = (\mI-\alpha \mA_t)\vtheta_t + \alpha \vb_t.
\end{align}

To achieve stability as defined in \citet{sutton2016emphatic}, we need $\vb_t$ and $\mA_t$ to converge to unique fixed points $\vb$ and $\mA$, and we need the steady state updates to be stable regardless of the initial parameters $\vtheta_0$, equivalently requiring $\mA$ to be a positive-definite matrix. 
\begin{align}
    \mA = \lim_{t\rightarrow\infty} \E[\mA_t] &= \lim_{t\rightarrow\infty} \E_{\mu}\vphi(S_t) \sum_{i=t}^{t+n-1} \prod_{j=t}^{i-1}(\gamma_{j+1}\rho_j)\rho_i \left[\vphi(S_i) - \gamma_{i+1} \vphi(S_{i+1})\right]^T \\
    &= \sum_s d_{\mu} (s) \E_{\mu} \left[\vphi(S_t) \sum_{i=t}^{t+n-1} \prod_{j=t}^{i-1}(\gamma_{j+1}\rho_j)\rho_i \left[\vphi(S_i) - \gamma_{i+1} \vphi(S_{i+1})\right]^T \middle\vert S_t=s\right]\\
    &= \sum_{i=t}^{t+n-1} \sum_s d_{\mu} (s) \E_{\mu} \left[\vphi(S_t) \prod_{j=t}^{i-1}(\gamma_{j+1}\rho_j)\rho_i \left[\vphi(S_i) - \gamma_{i+1} \vphi(S_{i+1})\right]^T \middle\vert S_t=s\right]\label{eq:dev1}\\
    &= \mPhi^T \mD_\mu\left[(\mI - \mP_\pi\mGamma) + (\mP_\pi\mGamma - \mP^2_\pi\mGamma^2) + \ldots + (\mP^{n-1}_\pi \mGamma^{n-1} - \mP^n_\pi \mGamma^n)\right] \mPhi\label{eq:dev2}\\
    &= \mPhi^T \mD_\mu\left[\mI- \mP^n_\pi\mGamma^n\right] \mPhi, \label{eq:A_lim}
\end{align}
where $\mGamma$ is a diagonal matrix with diagnal entries $\mGamma_{t,t} = \gamma_t \doteq \gamma(S_t)$. To see the derivation from Eq.~\ref{eq:dev1} to Eq.~\ref{eq:dev2}, take one term in the sum indexed by $i$. We have

\begin{align}
    & \sum_s d_{\mu} (s) \E_{\mu} \left[\vphi(S_t) \prod_{j=t}^{i-1}(\gamma_{j+1}\rho_j)\rho_i \left[\vphi(S_i) - \gamma_{i+1} \vphi(S_{i+1})\right]^T \middle\vert S_t=s\right]\\
    = &\sum_s d_{\mu} (s) \vphi(s) \sum_{A_t, S_{t+1} \ldots, A_i, S_{i+1}} \prod_{k=t}^{i}\mu(A_k|S_k) p(S_{k+1}|S_k, A_k) \frac{\pi(A_k|S_k)}{\mu(A_k|S_k)} \prod_{j=t}^{i-1}\gamma_{j+1}\left[\vphi(S_i) - \gamma_{i+1} \vphi(S_{i+1})\right]^T\\
    = &\sum_s d_{\mu} (s) \vphi(s) \sum_{A_t, S_{t+1} \ldots, A_i, S_{i+1}} \prod_{k=t}^{i} p(S_{k+1}|S_k, A_k) \pi(A_k|S_k) \prod_{j=t}^{i-1}\gamma_{j+1}\left[\vphi(S_i) - \gamma_{i+1} \vphi(S_{i+1})\right]^T\\
    = &\sum_s d_{\mu} (s) \vphi(s) \E_{\pi} \left[\prod_{j=t}^{i-1}\gamma_{j+1}\left[\vphi(S_i) - \gamma_{i+1} \vphi(S_{i+1})\right]^T\middle\vert S_t=s\right]\\
    = &\sum_s d_{\mu} (s) \vphi(s)
    \sum_{S_{t+1}} \gamma_{t+1} [\mP_{\pi}]_{S_t S_{t+1}}\cdots\sum_{S_i} \gamma_i [\mP_{\pi}]_{S_{i-1} S_i} \left[\vphi(S_i) - \sum_{S_{i+1}} \gamma_{i+1} [\mP_{\pi}]_{S_i S_{i+1}} \vphi(S_{i+1}) \right]^T\\
    = &\mPhi^T \mD_{\mu} (\mP_{\pi}^{i-t}\mGamma^{i-t} - \mP_{\pi}^{i-t+1}\mGamma^{i-t+1} ) \mPhi 
\end{align}

Back to Eq.~\ref{eq:A_lim}, the resulting matrix $\mA=\mPhi^T \mD_\mu\left[\mI- \mP^n_\pi\mGamma^n\right] \mPhi$ is not necessarily positive definite since the key matrix $\mD_\mu\left[\mI- \mP^n_\pi\mGamma^n\right]$ can be non-positive definite. For example, in the two-state MDP when $n=2$, let the discount $\gamma=0.99$. We know that the steady state distribution following the behavior policy is equal probability of being in either state, and the target policy always goes right, i.e. 

\begin{align}
\mD_{\mu} = 
\begin{bmatrix}
0.5 & 0\\
0 & 0.5
\end{bmatrix},
\mP_{\pi} = 
\begin{bmatrix}
0 & 1\\
0 & 1
\end{bmatrix}.
\end{align}

Hence the key matrix is 
\begin{align}
    \mD_\mu\left[\mI-\gamma^2 \mP^2_\pi\right] = 
    \begin{bmatrix}
        0.5 & -0.99^2/2\\
        0 & (1-0.99^2)/2
    \end{bmatrix}.
\end{align}
It is not a positive definite matrix through checking multiplication on both sides by setting $\mPhi=(1,2)^T$.

\subsection{Safety Guarantee}
\label{app:safety_region}

Recall that the key matrix of the TD(0) algorithm is given by $\mD_\mu\left[\mI -\mP_{\pi}\mGamma\right]$. We now briefly summarize a few facts about the key matrix from \cite{sutton2016emphatic}. First, the diagonal entries of the key matrix are positive and the off-diagonal entries are negative, so in order to show its positive definiteness it is enough to show that each row
sum plus the corresponding column sum is positive. The row sums are all positive because $\mP_\pi$ is a stochastic matrix and values in $\mGamma$ are smaller than 1. Thus it only remains to show that the column sums are non-negative. The problem is that this is not true for a general $\mD_\pi, \mD_\mu$ as was shown in \cite{sutton2016emphatic}. We further showed that this is not true for general $n$-step TD learning (App.~\ref{app:unstable_tdn}). Yet, we show next that for a distribution $\mD_\mu$ that is close enough to $\mD_\pi$ the key matrix is still positive definite. Intuitively, the implication of this result is that doing off-policy learning with $\mD_\pi \sim \mD_\mu$ is stable. To show that, we need to show that the column sums of the key matrix are all positive. We begin by lower bounding them as follows:

\begin{align}
\mathbf{1}^T \mD_\mu\left[\mI- \mP_{\pi}\mGamma\right] & = \vd_\mu^T \left[\mI- \mP_{\pi}\mGamma\right] \\
& = (\vd_\pi + \vd_\mu - \vd_\mu)^T \left[\mI- \mP_{\pi}\mGamma\right] \\
& = \vd_\pi^T\left[\mI- \mP_{\pi}\mGamma\right] + (\vd_\mu - \vd_\pi)^T \left[\mI- \mP_{\pi}\mGamma\right] \\
& \ge \vd_\pi^T(\mI-\mGamma) + (\vd_\mu - \vd_\pi)^T \left[\mI - \mP_{\pi}\mGamma\right] \label{eq:lower_bound_key}.
\end{align}
It remains to show that \cref{eq:lower_bound_key} has only positive coordinates. 
The i-th coordinate is given by $\vd_\pi^T[\mI-\mGamma]_i + (\vd_{\mu} - \vd_{\mu})^T \left[\mI- \mP_{\pi}\mGamma\right]_i,$ where the subscript $i$ denotes the i-th column of a matrix. By Holder inequality, we have that:
\begin{align}
\vd_\pi^T[\mI-\mGamma]_i + (\vd_\mu - \vd_\pi)^T \left[\mI- \mP_{\pi}\mGamma\right]_i & \ge \vd_\pi^T[\mI-\mGamma]_i - ||\vd_\mu - \vd_\pi||_{\infty} ||\left[\mI- \mP_\pi \mGamma\right]_i|| _1.
\end{align}
The first term in the last equation $\vd_\pi^T[\mI-\mGamma]_i$ is positive and does not depend on $\mu$. The second term has two contributions. The first one, $||\vd_\mu - \vd_\pi||_{\infty}$ depends on $\mu$ and $\pi$ but it can become as small as we want in the limit that $\mu\rightarrow\pi$. The second quantity, $||\left[\mI- \mP_\pi \mGamma\right]_i|| _1$ depends only on $\pi$. This implies that for a fixed $\pi$ there exists a $\mu$ that is close enough to it such that the key matrix is positive definite. 

Note that we can repeat this analysis for $n$-step TD. In this case we need to show that the key matrix $\mD_\mu\left[\mI- \mP^{n}_\pi \mGamma^{n}\right]$ is positive definite (see \cref{app:unstable_tdn} above for its derivation). All the derivation we did above for the TD(0) applies to the $n$-step TD scenario by replacing $\mP_\pi$ with $\mP^n_\pi$. The only step which we need to justify is $\vd_\pi^T\left[\mI-\mP_{\pi}^n\mGamma^n\right] = \vd_\pi^T(\mI-\mGamma),$ but to see this recall that $\vd_\pi^T \mP_{\pi} = \vd_\pi^T$, so $\vd_\pi^T \mP_{\pi}^n = \vd_\pi^T \mP_{\pi}\mP_{\pi}^{n-1} = \vd_\pi^T \mP_{\pi}^{n-1} = \ldots= \vd_\pi^T.$

\section{WETD Derivation}
\subsection{TD($\lambda_t$) as mixed $n$-step TD target}
\label{app:lambda_tdn}
Defining $\lambda_t$ as in Eq.~\ref{eq:tdn_lambda}, we write out the off-policy TD$(\lambda_t)$ learning target by adding the importance sampling correction to Eq. (12.10) in \cite{sutton2018} for an arbitrarily large integer $q$
\begin{align}
\tilde{G}_t \doteq V(S_t) + \hskip-0.1cm \sum_{i=t}^{t+q-1} (\prod_{j=t}^{i-1} \rho_j \gamma_{j+1}\lambda_{j}) \hskip0.1cm \lambda_i \rho_i \delta_i.
\end{align}
In the sum of weighted TD errors from time $t$ to time $(t+q-1)$, the weight $\prod_{j=t}^{i} \lambda_j$ is zero if any of the values $\lambda_j=0$, and the TD return bootstraps at the first encounter of $\lambda_j=0$. For simplicity, first consider when $t=0$, the $n$-step TD corresponds to the truncated return where all terms in the sum are zero for $i\geq n$, requiring $\lambda_n=0$. At $t=n$, we have that $\lambda_{2n}=0$ produces the $n$-step TD learning target. In general, this corresponds to $\lambda_j=0$ whenever $j$ is a multiple of $n$. To check that this is the mixed update for any $k$-th sample in the trajectory,
\begin{align}
\label{eq:mixed_lambda}
\tilde{G}_{t+k} \doteq V(S_{t+k}) + \hskip-0.1cm \sum_{i=t+k}^{t+q-1} (\prod_{j=t+k}^{i-1} \rho_j \gamma_{j+1}\lambda_j) \hskip0.1cm \lambda_i\rho_i\delta_i = V(S_{t+k}) + \hskip-0.1cm \sum_{i=t+k}^{t+n-1} (\prod_{j=t+k}^{i-1} \rho_j \gamma_{j+1}) \hskip0.1cm \rho_i\delta_i,
\end{align}
since $t+n$ is the smallest number bigger than $t$ such that $t+n$ is a multiple of $n$. Thus we recover the mixed $n$-step update where each sample $V(S_{t+k})$ in the trajectory is updated with $(n-k)$-step TD error. 

\subsection{TD($\lambda_t$) as mixed V-trace target}
\label{app:lambda_vtrace}
In Eq.~\ref{eq:mixed_lambda} above, if we set $\lambda_j = \bar{\rho}_j/\rho_j$ as defined in Eq.~\ref{eq:vtrace_lambda}, we recover the mixed V-trace target.
\begin{align}
\label{eq:mixed_lambda_vtrace}
\tilde{G}_{t+k} \doteq V(S_{t+k}) + \hskip-0.1cm \sum_{i=t+k}^{t+n-1} (\prod_{j=t+k}^{i-1} \rho_j \gamma_{j+1}\lambda_j) \hskip0.1cm \lambda_i\rho_i\delta_i = V(S_{t+k}) + \hskip-0.1cm \sum_{i=t+k}^{t+n-1} (\prod_{j=t+k}^{i-1} \bar{\rho}_j \gamma_{j+1}) \hskip0.1cm \bar{\rho}_i\delta_i.
\end{align}

\section{NETD Derivation}
\label{app:netd_derivation}
In order to simplify notations in the computation below, we denote the NETD trace as $F$ by omitting the superscript $(n)$. Given the possibly unstable asymptotic updates shown in App.~\ref{app:unstable_tdn}, we can modify the updates with $F_t$ to ensure that the new limit matrix $\mA$ is positive definite, i.e. to stabilize learning. The $F_t$-modified parameter update is
\begin{align}
    \vtheta_{t+1} &= \vtheta_t + \alpha F_t\sum_{i=t}^{t+n-1} \prod_{j=t}^{i-1}(\gamma_{j+1}\rho_j)\rho_i(R_{i+1} + \gamma_{i+1}\vtheta_t^T\vphi(S_{i+1})-\vtheta_t^T\vphi(S_i)) \vphi(S_t)\\
    &=\vtheta_t + \alpha\left(\underbrace{F_t\sum_{i=t}^{t+n-1} \prod_{j=t}^{i-1}(\gamma_{j+1}\rho_j)\rho_i R_{i+1} \vphi(S_t)}_{\vb_t} - \underbrace{F_t\vphi(S_t) \sum_{i=t}^{t+n-1} \prod_{j=t}^{i-1}(\gamma_{j+1}\rho_j)\rho_i \left[\vphi(S_i) - \gamma_{i+1} \vphi(S_{i+1})\right]^T}_{\mA_t} \vtheta_t  \right).
\end{align}

Then the $\mA$ matrix for the emphatically modified $n$-step TD update becomes
\begin{align}
    \mA &= \lim_{t\rightarrow\infty} \E[\mA_t] = \lim_{t\rightarrow\infty} \E_{\mu}F_t\vphi(S_t) \sum_{i=t}^{t+n-1} \prod_{j=t}^{i-1}(\gamma_{j+1}\rho_j)\rho_i \left[\vphi(S_i) - \gamma_{i+1} \vphi(S_{i+1})\right]^T \label{line:l1}\\
    &= \sum_s d_{\mu} (s) \lim_{t\rightarrow\infty} \E_{\mu} \left[F_t\vphi(S_t) \sum_{i=t}^{t+n-1} \prod_{j=t}^{i-1}(\gamma_{j+1}\rho_j)\rho_i\left[\vphi(S_i) - \gamma_{i+1} \vphi(S_{i+1})\right]^T \middle\vert S_t=s\right]\label{line:l2}\\
    &= \sum_s d_{\mu} (s) \lim_{t\rightarrow\infty}  \E_{\mu} \left[F_t| S_t=s\right] \E_{\mu} \left[\vphi(S_t)  \sum_{i=t}^{t+n-1} \prod_{j=t}^{i-1}(\gamma_{j+1}\rho_j)\rho_i \left[\vphi(S_i) - \gamma \vphi(S_{i+1})\right]^T \middle\vert S_t=s\right]\label{line:l3}.
\end{align}    
Eq.~\ref{line:l2} is obtained from Eq.~\ref{line:l1} by using the linearity of expectation over the learning updates on different states weighted by their steady state visit frequencies. Eq.~\ref{line:l3} is obtained from Eq.~\ref{line:l2} since conditioned on the state $S_t$, emphatic trace $F_t$ computed with variables ``from the past'' is independent from the TD update using variables ``in the future''. Since the second expectation term in Eq.~\ref{line:l3} does not depend on the time step $t$ but only depends on the state value $s$ under the steady state distribution, we take it out of the limit and for clarity, we re-index its time step with a new variable $k$. This becomes
\begin{align}  
    & \sum_s \underbrace{d_{\mu} (s) \lim_{t\rightarrow\infty}  \E_{\mu} \left[F_t| S_t=s\right]}_{f(s)}\E_{\mu} \left[\vphi(S_k)  \sum_{i=k}^{k+n-1} \prod_{j=k}^{i-1}(\gamma_{j+1}\rho_j)\rho_i \left[\vphi(S_i) - \gamma \vphi(S_{i+1})\right]^T \middle\vert S_k=s\right]\label{line:l4}\\
    &= \sum_s f(s) \E_{\mu} \left[\vphi(S_k)  \sum_{i=k}^{k+n-1} \prod_{j=k}^{i-1}(\gamma_{j+1}\rho_j)\rho_i \left[\vphi(S_i) - \gamma \vphi(S_{i+1})\right]^T \middle\vert S_k=s\right]\label{line:l5}\\
    &= \mPhi^T \mF(\mI-\mP_{\pi}^n\mGamma^n) \mPhi,\label{line:l6}
\end{align}
where $\mF$ is a diagonal matrix with diagonal elements $f(s) \doteq d_{\mu}(s)\lim_{t\rightarrow\infty}\E_{\mu}[F_t | S_t=s]$, which we assume exists. Eq.~\ref{line:l6} reorganizes terms in Eq.~\ref{line:l5} into their matrices notations and forms the telescoping sum as in Eq.~\ref{eq:dev2}.

Recall that the sufficient condition for a matrix to be positive definite from \citet{sutton2016emphatic} is to have all positive diagonal entries and all negative off-diagonal entries in the key matrix, and that its row sum plus column sum is positive. The key matrix is $\mF(\mI-\mP_{\pi}^n\mGamma^n)$ where $\mF$ is a diagonal matrix with all positive entries on the diagonal, hence with discounts and transition probabilities smaller than 1, the key matrix has positive diagonal entries and all negative entries on the off-diagonal. The row sum is positive since $\mP_{\pi}^n$ is a transition matrix with row sum equal to 1 and the discounts are smaller than 1. Hence to make the key matrix $\mF(\mI-\mP_{\pi}^n\mGamma^n)$ into a positive definite matrix, we just need the columns sum to be positive. Let $\vf$ be the diagonal entries of $\mF$. If we define
\begin{align}
    \vf \doteq \left[\mI- (\mP_\pi^T)^n\mGamma^n\right]^{-1}\vd_{\mu},
\end{align}
where $\vd_{\mu}$ is the vector of diagonal elements of $\mD_\mu$, then the column sum of the key matrix is
\begin{align}
    \mathbf{1}^T \mF (\mI- \mP_{\pi}^n\mGamma^n) &= \vf^T  (\mI- \mP_{\pi}^n\mGamma^n)\\
    &= \vd_{\mu}^T \left[\mI- (\mP_\pi)^n\mGamma^n\right]^{-1} (\mI- \mP_{\pi}^n\mGamma^n)\\
    &= \vd_{\mu}^T,
\end{align}
which is an all positive vector. Therefore $\mF$ thus defined, the resulting key matrix is positive definite and the steady state learning updates are stable.

In order to apply the emphatic trace to every update, we need to derive the follow-on trace (NETD) at every time step that corresponds to the thus defined $\mF$ matrix. We show that this following trace gives the above defined $\mF$ matrix:
\[
F_t = \prod_{i=1}^n (\gamma_{t-i+1} \rho_{t-i}) F_{t-n} + 1, \text{ with }F_0, F_1, \ldots, F_{n-1}=1.
\]

Take the $n=2$ for an example, i.e. $F_t = \gamma_t\gamma_{t-1} \rho_{t-1}\rho_{t-2} F_{t-2} + 1, \text{ with }F_0, F_1=1$. Recall for any state $s$,
\begin{align}
\label{eq:why_netd}
    f(s) &\doteq d_{\mu}(s) \lim_{t\rightarrow \infty} \E_{\mu} [F_t \mid S_t=s]\\
    &= d_{\mu}(s) \lim_{t\rightarrow \infty} \E_{\mu} [\gamma_t\gamma_{t-1} \rho_{t-1}\rho_{t-2} F_{t-2} + 1 \mid S_t=s]\\
    &= d_{\mu}(s) + d_{\mu}(s) \lim_{t\rightarrow \infty} \E_{\mu} [\rho_{t-1}\rho_{t-2} \gamma_t\gamma_{t-1} F_{t-2} \mid S_t=s]\\
    &= d_{\mu}(s) + d_{\mu}(s) \sum_{a', s', a'', s''} P_{\mu}(S_{t-1}=s', A_{t-1}=a'|S_t=s) P_{\mu}(S_{t-2}=s'', A_{t-2}=a''|S_{t-1}=s') \cdot \notag\\
    &\hspace{7cm} \frac{\pi(a'|s')}{\mu(a'|s')}\frac{\pi(a''|s'')}{\mu(a''|s'')}\gamma(s)\gamma(s') \lim_{t\rightarrow \infty} \E_{\mu} [F_{t-2} \mid S_{t-2}=s'']\\
    &= d_{\mu}(s) + \cancel{d_{\mu}(s)} \sum_{a', s', a'', s''} \frac{\cancel{d_{\mu}(s')}\cancel{\mu(a'|s')}p(s|s', a')}{\cancel{d_{\mu}(s)}} \cdot \frac{\cancel{d_{\mu}(s'')}\cancel{\mu(a''|s'')}p(s'|s'', a'')}{\cancel{d_{\mu}(s')}} \cdot \notag \tag{by Bayes Rule}\\
    &\hspace{10cm} \frac{\pi(a'|s')}{\cancel{\mu(a'|s')}}\frac{\pi(a''|s'')}{\cancel{\mu(a''|s'')}} \frac{\gamma(s)\gamma(s')f(s'')}{\cancel{d_{\mu}(s'')}}\\
    &= d_{\mu}(s) + \gamma(s) \sum_{a', s', a'', s''} p(s|s', a')p(s'|s'', a'')\pi(a'|s')\pi(a''|s'') \gamma(s')f(s'')\\
    &= d_{\mu}(s) + \gamma(s) \sum_{s'}[\mP_{\pi}]_{s',s}\gamma(s')\sum_{s''}[\mP_{\pi}]_{s'',s'} f(s'').
\end{align}

Thus the vector
\begin{align}
\vf = \vd_{\mu} + {\mP^T_{\pi}}^2 \mGamma^2 \vf = (\mI + {\mP^T_{\pi}}^2 \mGamma^2 + {\mP^T_{\pi}}^4 \mGamma^4 + \cdots) \vd_{\mu} = (\mI- {\mP^T_{\pi}}^2 \mGamma^2)^{-1} \vd_{\mu}.
\end{align}
Since the case of any other positive integer value of $n$ can be derived exactly in the same way, we conclude
\begin{align}
    \mF = (\mI- {\mP^T_{\pi}}^n \mGamma^n)^{-1} \mD_{\mu}.
\end{align}

\section{Emphatic traces for the V-trace target}
\label{app:nevtrace_derivation}

Recall that V-trace was developed in \citep{espeholt2018} as a method to reduce the variance in importance sampling based off policy policy evaluation. The motivation for V-trace was to correct small off-policy discrepancies that result from parallelizing, i.e., the parameters of the actors lag behind the parameters of the learner. 
However, the analysis of V-trace was only performed in the tabular MDP setting and not with function approximation. We now show that with linear function approximation, V-trace suffers from the same stability issues as standard IS method, i.e., that the corresponding key matrix is not positive definite.  Recall that V-trace truncates the IS ratios in by some constant $\bar{\rho}$, such that $\bar{\rho}_t = \min \{\bar \rho, \rho_t\}.$ Before we begin we introduce some notation. 

We denote the denominator in Eq.~\ref{eq:vtrace_fp} by $\nu(s) = \sum_{a'\in \mathcal{A}}\min (\bar{\rho}\mu(a'|s), \pi(a'|s)).$ Using this notation, we have that the importance sampling ratio between the true target V-trace policy $\pi_{\bar{\rho}}$ (Eq.~\ref{eq:vtrace_fp}) and the behaviour policy $\mu$ is given by: $\rho^v_t = \frac{\pi_{\bar{\rho}}(A_t|S_t)}{\mu(A_t|S_t)} = \frac{\bar{\rho}_t}{\nu(S_t)}.$  

\subsection{WEVtrace} We define the emphatic trace for the V-trace update at $n=1$ as:
\begin{equation}
    \label{eq:FV-Trace}
    F^v_t = F^v_{t-1}\gamma_t \rho^v_{t-1} + 1, \,\forall t>0.
\end{equation}
where $F^v_0 = 1$. To see why it stabilizes V-trace learning, we now examine the limit of the $\mA$ matrix when using the truncated importance sampling ratios $\bar{\rho}_t$ as in V-trace together with the V-trace follow on trace (Eq.~\ref{eq:FV-Trace}). We have that:

\begin{align}
    \mA &= \lim_{t\rightarrow\infty} \E[\mA_t] = \lim_{t\rightarrow\infty} \E_{\mu}F^v_t\vphi(S_t) \bar{\rho}_t \left[\vphi(S_t) - \gamma_{t+1} \vphi(S_{t+1})\right]^T \\
    &= \sum_s d_{\mu} (s) \lim_{t\rightarrow\infty} \E_{\mu}\left[ F^v_t\bar{\rho}_t \vphi(S_t) \left[\vphi(S_t) - \gamma_{t+1} \vphi(S_{t+1})\right]^T \middle\vert S_t=s\right] \\
    &= \sum_s d_{\mu} (s) \lim_{t\rightarrow\infty} \E_{\mu}\left[ F^v_t \rho^v_t \nu(S_t) \vphi(S_t) \left[\vphi(S_t) - \gamma_{t+1} \vphi(S_{t+1})\right]^T \middle\vert S_t=s\right], \label{line1}
\end{align}   
plugging in the definition of $\rho^v_t$. Just like the derivation from Eq.~\ref{line:l3} to Eq.~\ref{line:l4}, we use the fact that given $S_t$, the emphatic trace $F^v_t$ is independent of $\rho_t^v \nu(S_t) \vphi_t (\vphi_t - \gamma \vphi_{t+1})$ and the expected value of the latter term does not depend on the time step $t$ under the steady state distribution, so for clarity we change the time index to a new variable $k$. Thus we have 
\begin{align}
    &\sum_s d_{\mu} (s) \lim_{t\rightarrow\infty} \E_{\mu}\left[ F^v_t \middle\vert S_t=s\right] \E_{\mu} \left[\rho^v_k \nu(S_k) \vphi(S_k) \left[\vphi(S_k) - \gamma_{k+1} \vphi(S_{k+1})\right]^T \middle\vert S_k=s\right] \label{line2}  \\
    &= \sum_s f^v (s) \nu(s) \E_{\mu} \left[ \rho^v_k  \vphi(S_k) \left[\vphi(S_k) - \gamma_{k+1} \vphi(S_{k+1})\right]^T \middle\vert S_k=s\right] \label{line3}\\
    &= \sum_s f^v (s) \nu(s) \E_{\pi_{\bar\rho}} \left[\vphi(S_k) \left[\vphi(S_k) - \gamma_{k+1} \vphi(S_{k+1})\right]^T \middle\vert S_k=s\right] \label{line4}\\
    &= \mPhi^T \mF^v \mN\left[\mI- \mP_{\pi_{\bar \rho}}\mGamma\right] \mPhi\label{line5}.
\end{align}
In Eq.~\ref{line3} we used the fact that $\nu(s)$ is a function of the state only (and not the action), in Eq.~\ref{line4} we replace the expectation over $\mu$ with an expectation over $\pi_{\bar \rho },$ and finally in Eq.~\ref{line5} we let $\mN$ be a diagonal matrix with elements $\nu(s)$ on the diagonal. 

It is easy to see that without the emphatic trace $F^v_t$, the V-trace steady state key matrix is $\mN \mD_\mu\left[\mI- \mP_{\pi_{\bar \rho}}\mGamma\right]$, which is not necessarily positive definite. Thus V-trace may suffer from instability issues with linear function approximation. Following \citep{sutton2016emphatic}, we have that $\mF^v = \left[\mI- \mP^T_{\pi_{\bar \rho}}\mGamma\right]^{-1}\mD_\mu$ with $F^v_t$ as defined in Eq.~\ref{eq:wetd}. Let's check that the key matrix $\mN \mF^v\left[\mI- \mP_{\pi_{\bar \rho}}\mGamma\right]$ is positive definite. First notice that the $\mF^v$ and $\mN$ are diagonal matrices with positive diagonal entries. With transition probabilities smaller than 1, the key matrix must have positive diagonal entries and negative off-diagonal entries. Moreover its row sum is an all positive vector. The column sum of the key matrix is
\begin{align}
    \mathbf{1}^T \mF^v (\mI- \mP_{\pi_{\bar \rho}}\mGamma) \mN  &= \vf^T (\mI- \mP_{\pi_{\bar \rho}}\mGamma) \mN \\
    &= \vd_{\mu}^T \left[\mI-\mP_{\pi_{\bar \rho}}\mGamma\right]^{-1} (\mI- \mP_{\pi_{\bar \rho}}\mGamma) \mN\\
    &= \vd_{\mu}^T \mN,
\end{align}
which is an all positive vector. Hence $F^v_t$ stabilized learning. Finally, combined with the definition of $\lambda^v_t$ in Eq.~\ref{eq:vtrace_lambda}, now we have the WEVtrace.

\subsection{NEVtrace} We define the emphatic trace for the $n$-step V-trace update as:
\begin{align}
\label{eq:nevtrace_def}
    F^{(n), v}_t &= \prod_{i=1}^n (\gamma_{t-i+1} \rho^v_{t-i}) F^{(n), v}_{t-n} + 1,
\end{align}

where $F^{(n), v}_0, F^{(n), v}_1, \ldots, F^{(n), v}_{n-1}=1$. The $\mA$ matrix for the emphatically modified V-trace update becomes
\begin{align}
    \mA &= \lim_{t\rightarrow\infty} \E[\mA_t] = \lim_{t\rightarrow\infty} \E_{\mu}F_t\vphi(S_t) \sum_{i=t}^{t+n-1} \prod_{j=t}^{i-1}(\gamma_{j+1}\bar{\rho}_j)\bar{\rho}_i \left[\vphi(S_i) - \gamma_{i+1} \vphi(S_{i+1})\right]^T \\
    &= \,\,\ldots \tag{similar derivations as Eq.~\ref{line:l1} to Eq.~\ref{line:l5}}\\
    &= \sum_s f^{(n), v}(s) \E_{\mu} \left[\vphi(S_k)  \sum_{i=k}^{k+n-1} \prod_{j=k}^{i-1}(\gamma_{j+1}\bar{\rho}_j)\bar{\rho}_i \left[\vphi(S_i) - \gamma_{i+1} \vphi(S_{i+1})\right]^T \middle\vert S_k=s\right]\\
    &= \sum_{i=k}^{k+n-1}\sum_s f^{(n), v}(s) \E_{\mu} \left[\vphi(S_k) \prod_{j=k}^{i-1} (\gamma_{j+1}\rho^v_j \nu(S_j)) \rho^v_i \nu(S_i)\left[\vphi(S_i) - \gamma_{i+1} \vphi(S_{i+1})\right]^T \middle\vert S_k=s\right]\\
    &= \sum_{i=k}^{k+n-1} \mPhi^T \mN^{i-k+1} \mF^{(n), v} \left[\mP_{\pi_{\bar \rho}}^{i-k}\mGamma^{i-k}- \mP_{\pi_{\bar \rho}}^{i-k+1}\mGamma^{i-k+1}\right] \mPhi\\
    &\approx \sum_{i=k}^{k+n-1} \mPhi^T \mF^{(n), v} \left[( \mN^{i-k} \mP_{\pi_{\bar \rho}}^{i-k}\mGamma^{i-k}- \mN^{i-k+1} \mP_{\pi_{\bar \rho}}^{i-k+1}\mGamma^{i-k+1})\right] \mPhi\label{eq:approx_nevtrace}\\
    &= \mPhi^T \mF^{(n), v} (\mI- \mN^n \mP_{\pi_{\bar \rho}}^n\mGamma^n) \mPhi, 
\end{align}
where $f^{(n), v}(s)$ are diagonal entries of $\mF^{(n), v}$ and $\mF^{(n), v} \doteq \left[\mI- \mN^n {\mP^T_{\pi_{\bar \rho}}}^n\mGamma^n\right]^{-1}\mD_\mu$. Similar to before, this $\mF^{(n), v}$ makes the approximate key matrix $\mF^{(n), v} (\mI- \mN^n \mP_{\pi_{\bar \rho}}^n\mGamma^n)$ positive definite. Recall that $\mN$ is a diagonal matrix with either value 1 or some value in $(0, 1)$ for states where the IS weights are clipped by $\bar{\rho}$, and Eq.~\ref{eq:approx_nevtrace} is approximate by treating one of the $\mN$ matrices as an identity matrix.  

To see why this follows from the NEVtrace definition in Eq.~\ref{eq:nevtrace_def}, recall derivations for NETD trace in Eq.~\ref{eq:why_netd} for $n=2$. Here we have
\begin{align}
    f^{(n), v}(s) &= d_{\mu}(s) \lim_{t\rightarrow \infty} \E_{\mu} [F^{(n), v}_t \mid S_t=s]\\
    &= \,\, \ldots\\
    &= d_{\mu}(s) + \gamma(s)\sum_{s'}[\mP_{\pi_{\bar \rho}}]_{s,s'}\nu(s')\gamma(s')\sum_{s''}[\mP_{\pi_{\bar \rho}}]_{s',s''}\nu(s'') f(s'').
\end{align}
Thus the vector 
\begin{align}
\vf^{(n), v} = \vd_{\mu} + \mN^2 {\mP^T_{\pi_{\bar \rho}}}^2 \mGamma^2 \vf = (\mI + \mN^2 {\mP^T_{\pi_{\bar \rho}}}^2 \mGamma^2 + \mN^4 {\mP^T_{\pi_{\bar \rho}}}^4 \mGamma^4 + \cdots) \vd_{\mu} = (\mI- \mN^2 {\mP^T_{\pi_{\bar \rho}}}^2 \mGamma^2)^{-1} \vd_{\mu}.
\end{align}
Extending this to any other positive integer value of $n$, we conclude
\begin{align}
    \mF^{(n), v} = (\mI- \mN^n {\mP^T_{\pi_{\bar \rho}}}^n \mGamma^n)^{-1} \mD_{\mu}.
\end{align}

\newpage
\section{Hyperparameters}
\label{sec:surreal_hyperparams}

\textbf{Architectures.}

\begin{table}[h!]
\caption{Network architecture}
\begin{center}
\begin{tabular}{|l|l|l|l|}
    \hline
    Parameter & \\
    \hline 
    convolutions in block & (2, 2, 2, 2) \\
    channels & (64, 128, 128, 64) \\
    kernel sizes & (3, 3, 3, 3) \\
    kernel strides & (1, 1, 1, 1)  \\
    pool sizes & (3, 3, 3, 3)  \\
    pool strides & (2, 2, 2, 2)  \\
    frame stacking & 4 \\
    head hiddens & 512  \\
    activation & Relu \\

    \hline
\end{tabular}
\end{center}
\label{table:hyperparameters2}
\end{table}

Our DNN architecture is composed of a shared torso, which then splits to different heads. We have a head for the policy and a head for the value function (multiplied by the number of auxiliary tasks). Each head is a two-layered MLP with 512 hidden units, where the output dimension corresponds to 1 for the vale function head. For the policy head, we have $|\mathcal{A}|$ outputs that correspond to softmax logits. We use ReLU activations on the outputs of all the layers besides the last layer. For the policy head, we apply a softmax layer and use the entropy of this softmax distribution as a regularizer.  

The \textbf{torso} of the network is composed from residual blocks. In each block there is a convolution layer, with stride, kernel size, channels specified in \cref{table:hyperparameters2}, with an optional pooling layer following it. The convolution layer is followed by n - layers of convolutions (specified by blocks), with a skip contention. The output of these layers is of the same size of the input so they can be summed. The block convolutions have kernel size $3,$ stride $1$.  

\textbf{Hyperparameters.}
\cref{table:hyperparameters} lists all the hyperparameters used by our agent. Most of the hyperparameters follow the reported parameters from the IMPALA paper. For completeness, we list all of the exact values that we used below.  
\begin{table}[h!]
\caption{Hyperparameters table}
\begin{center}
\begin{tabular}{|l|l|l|l|}
    \hline
    Parameter & Value  \\
    \hline 
    total environment steps & 200e6 \\
    optimizer & RMSPROP \\
    start learning rate & $6 \cdot 10^{-4}$ (mixed), $2 \cdot 10^{-4}$ (fixed) \\
    end learning rate & 0  \\
    decay & 0.99 \\
    eps & 0.1 \\
    importance sampling clip & 1 \\
    gradient norm clip & $0.3$ (mixed), $1$ (fixed)\\
    trajectory $n$ & $40$ (mixed), $10$ (fixed)\\
    batch size (m) & 18 \\
    discount  $\gamma$ (main) &  $\sigmoid(4.6)\approx.99$ \\
    discount  $\gamma^1$ ($1^{st}$ auxiliary) &  $\sigmoid(4.4)\approx.988$ \\
    discount  $\gamma^2$ ($2^{nd}$ auxiliary) &  $\sigmoid(4.2)\approx.985$ \\
    \hline
\end{tabular}
\end{center}
\label{table:hyperparameters}
\end{table}

\newpage
\section{Atari experiment results}
\label{sec:atari_results}

\begin{figure}[h!]
\centering
\subfigure[NETD-ACE on Surreal versus Surreal]{
\includegraphics[width=0.59\columnwidth]{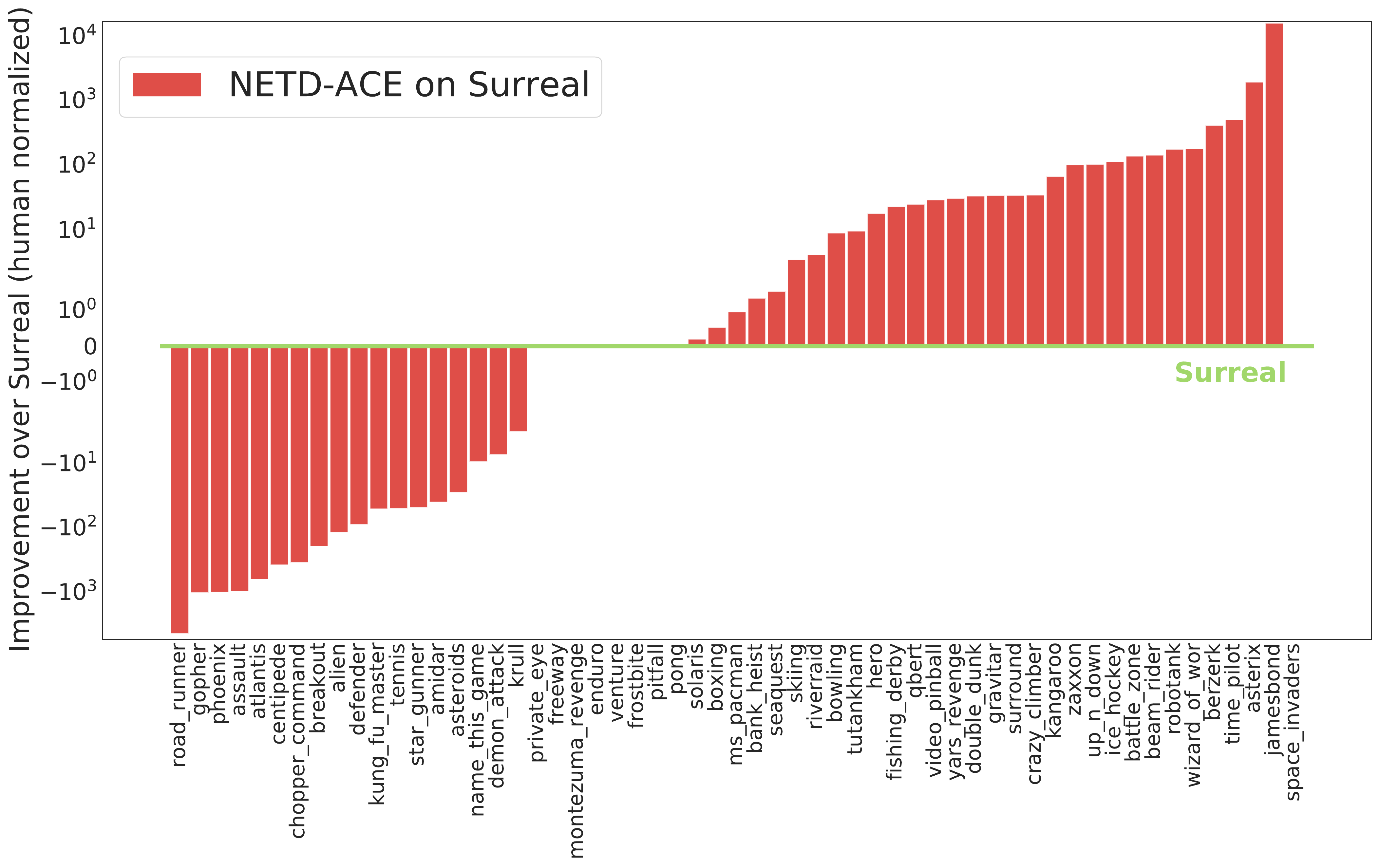}
\label{fig:barplot_NETD_ACE}
}
\vspace{-0.3cm}
\subfigure[NETD on Surreal versus Surreal]{
\includegraphics[width=0.59\columnwidth]{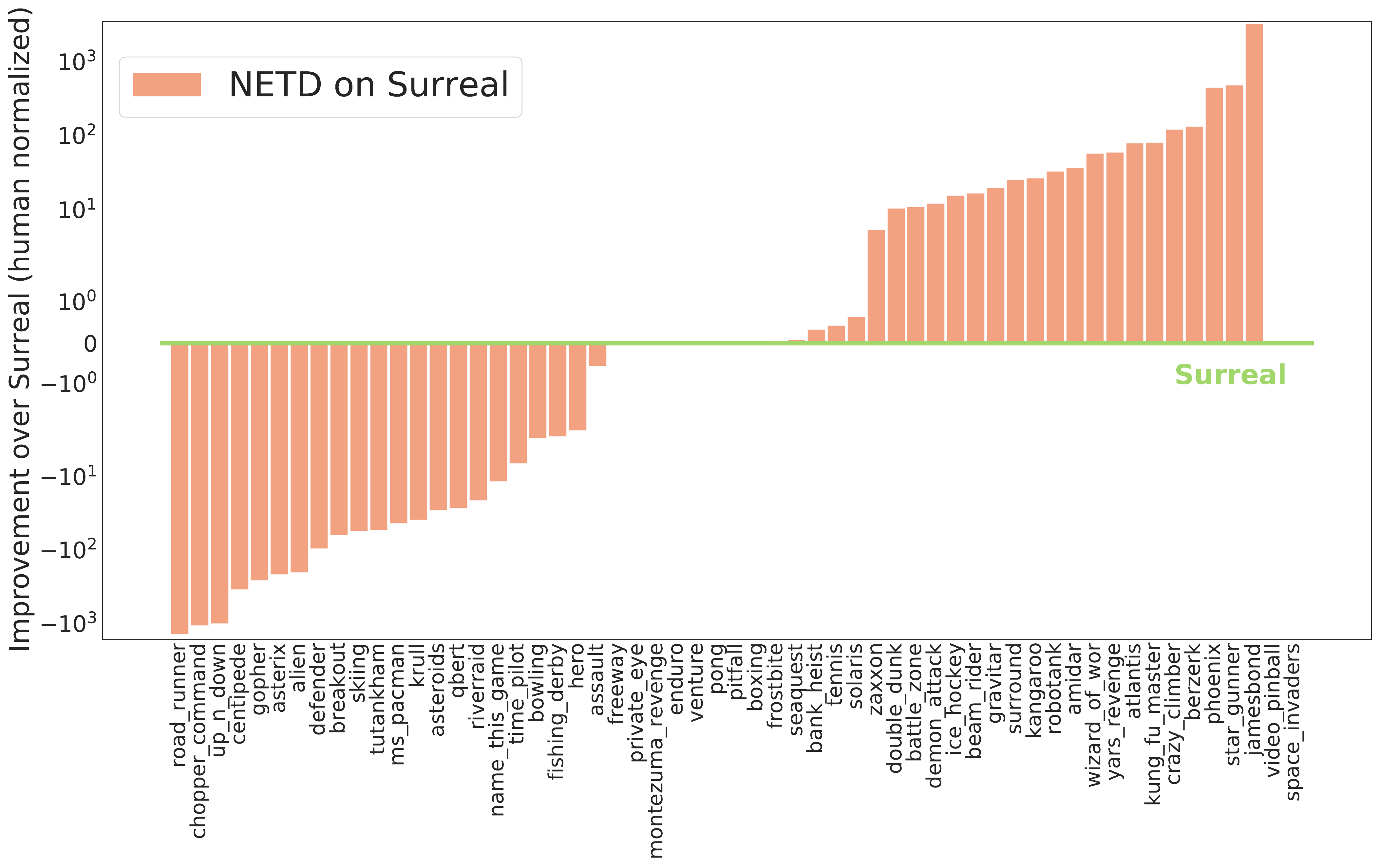}
\label{fig:barplot_NETD}
}
\vspace{-0.3cm}
\subfigure[NEVtrace on Surreal versus Surreal]{
\includegraphics[width=0.59\columnwidth]{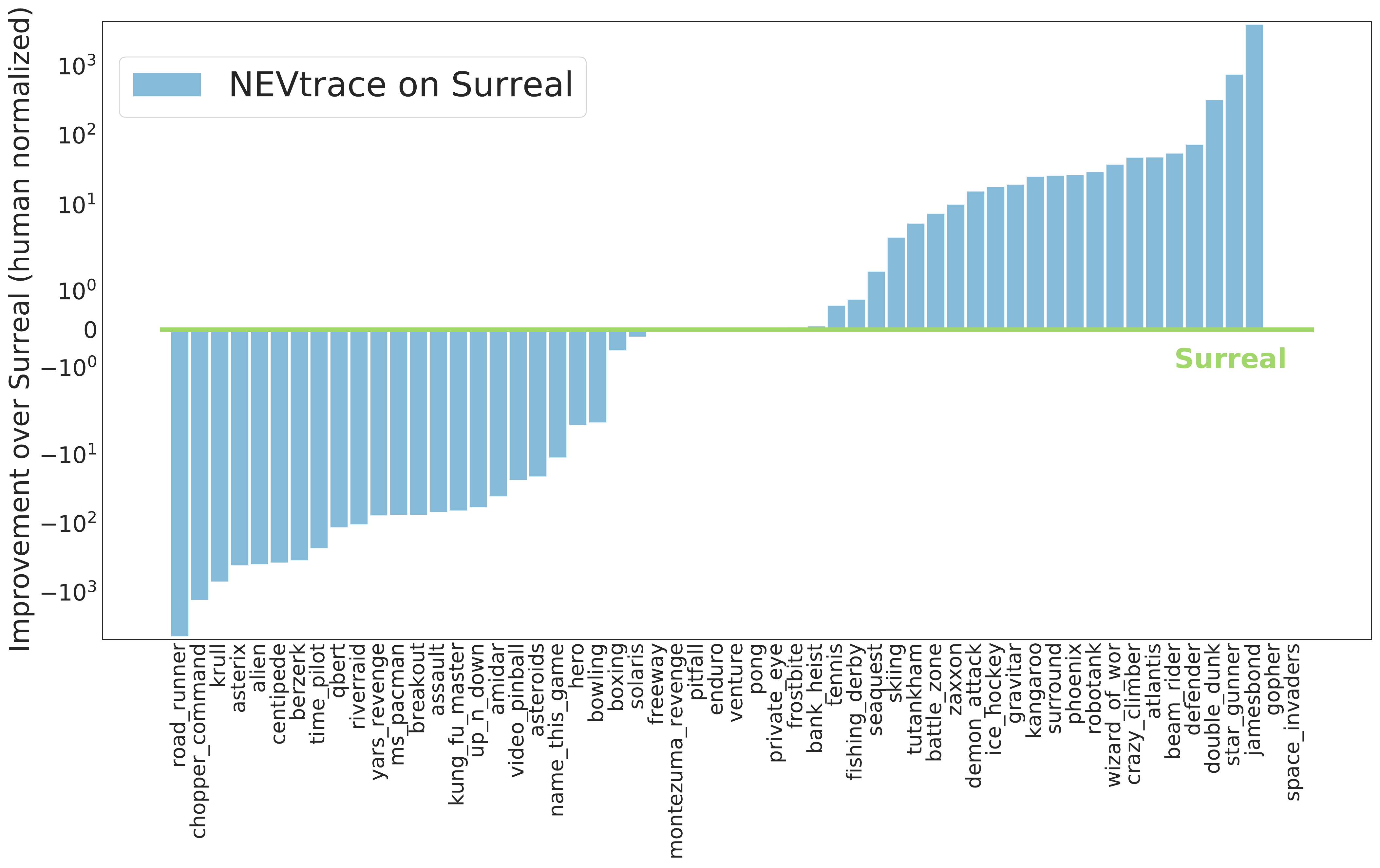}
\label{fig:barplot_NEVtrace}
}
\caption{
Improvement in individual human normalized game scores compared to Surreal: \subref{fig:barplot_NETD_ACE} NETD-ACE, \subref{fig:barplot_NETD} NETD and \subref{fig:barplot_NEVtrace} NEVtrace on Surreal versus Surreal in green. Results averaged across 3 seeds and the evaluation phase.}
\label{fig:barplot}
\end{figure}

\newpage
\section{Diagnostic experiments}
We include the full experiment results and additional implementation details on the diagnostic MDPs in this section. 

\subsection{Two-state MDP}
\label{app:2state}
In Fig.~\ref{fig:2state_n_0}, the rightmost column shows three baselines: Off-policy TD(0), Clipped off-policy $n$-step TD where all IS weights are clipped except for those directly on the TD error, i.e. unbiased V-trace by \cite{espeholt2018}, and V-trace. All three baselines diverged on this MDP. The emphatic algorithms all converged to the optimal fixed point $\theta=0$, however, notice that both emphatic TD (NETD/WETD) and emphatic V-trace (NEVtrace/WEVtrace) algorithms exhibited unstable learning, with some runs experiencing large jumps in value error late in training. The clipped emphatic traces (IS clipped at 1) theoretically have a finite variance, and empirically enjoy a faster convergence and stable learning after initial fluctuations.

As we increase the bootstrap length to $n=5$, the Off-policy 5-step TD baseline converged quickly while the other two baselines Clipped off-policy $n$-step TD and V-trace exhibited higher variances (see Fig.~\ref{fig:2state_n_4}). Similar to before, the clipped traces (row 2) were effective in variance reduction and demonstrated fast convergence. In comparison, WETD exhibited more variance in learning, NEVtrace converged slowly and NETD, WEVtrace were unstable late in training. 

\begin{figure*}[h!]
\centering
\includegraphics[width=\linewidth]{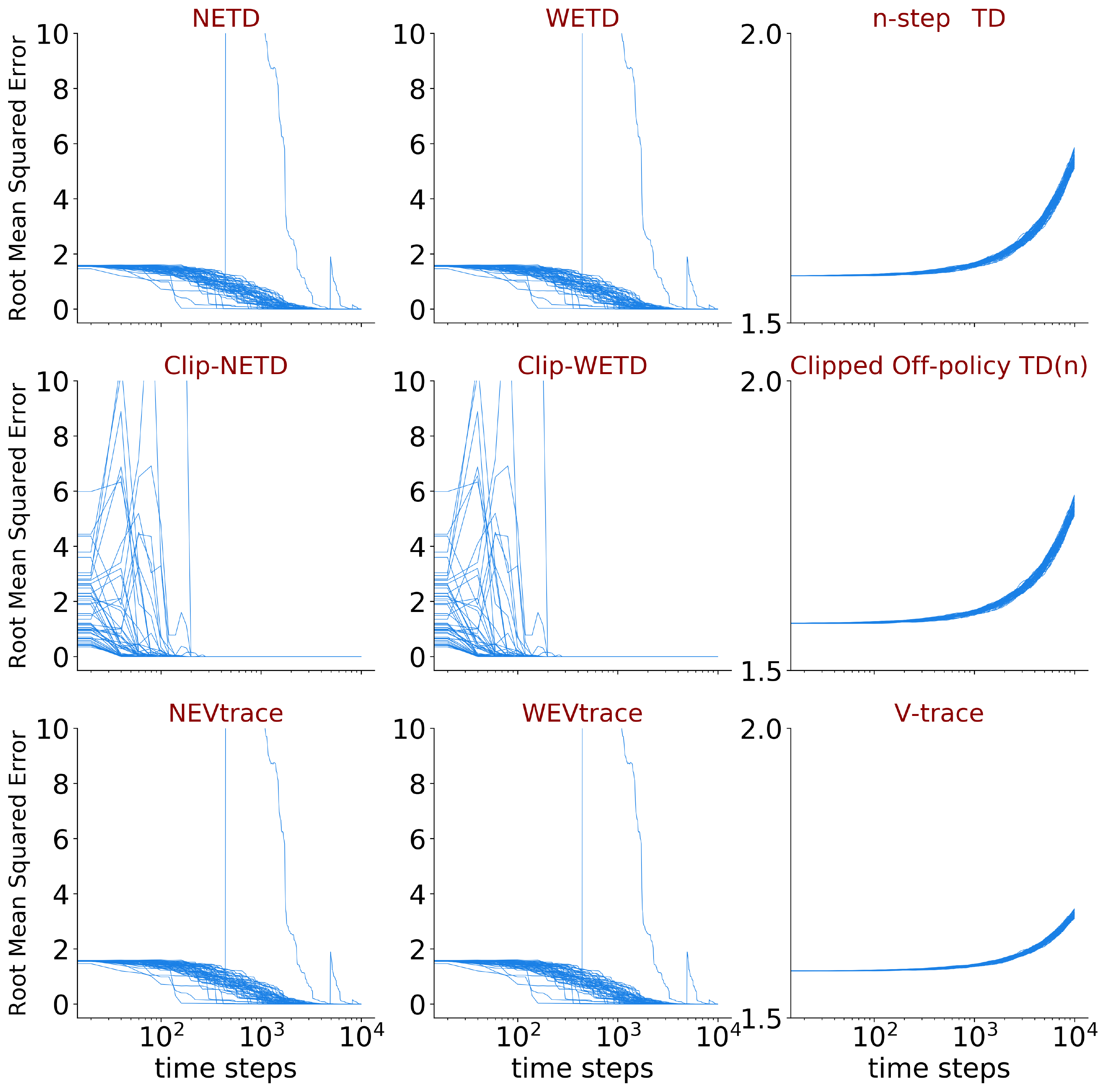}
\caption{RMSE over time on the two-state MDP with $\gamma=0.9$ and $n=1$. Each subplot shows fifty independent runs of each algorithm using the best setting setting of $\alpha$ found in the hyperparameter sweep. Note the log scale on x-axis.}
\label{fig:2state_n_0}
\end{figure*}

\begin{figure*}[h!]
\centering
\includegraphics[width=\linewidth]{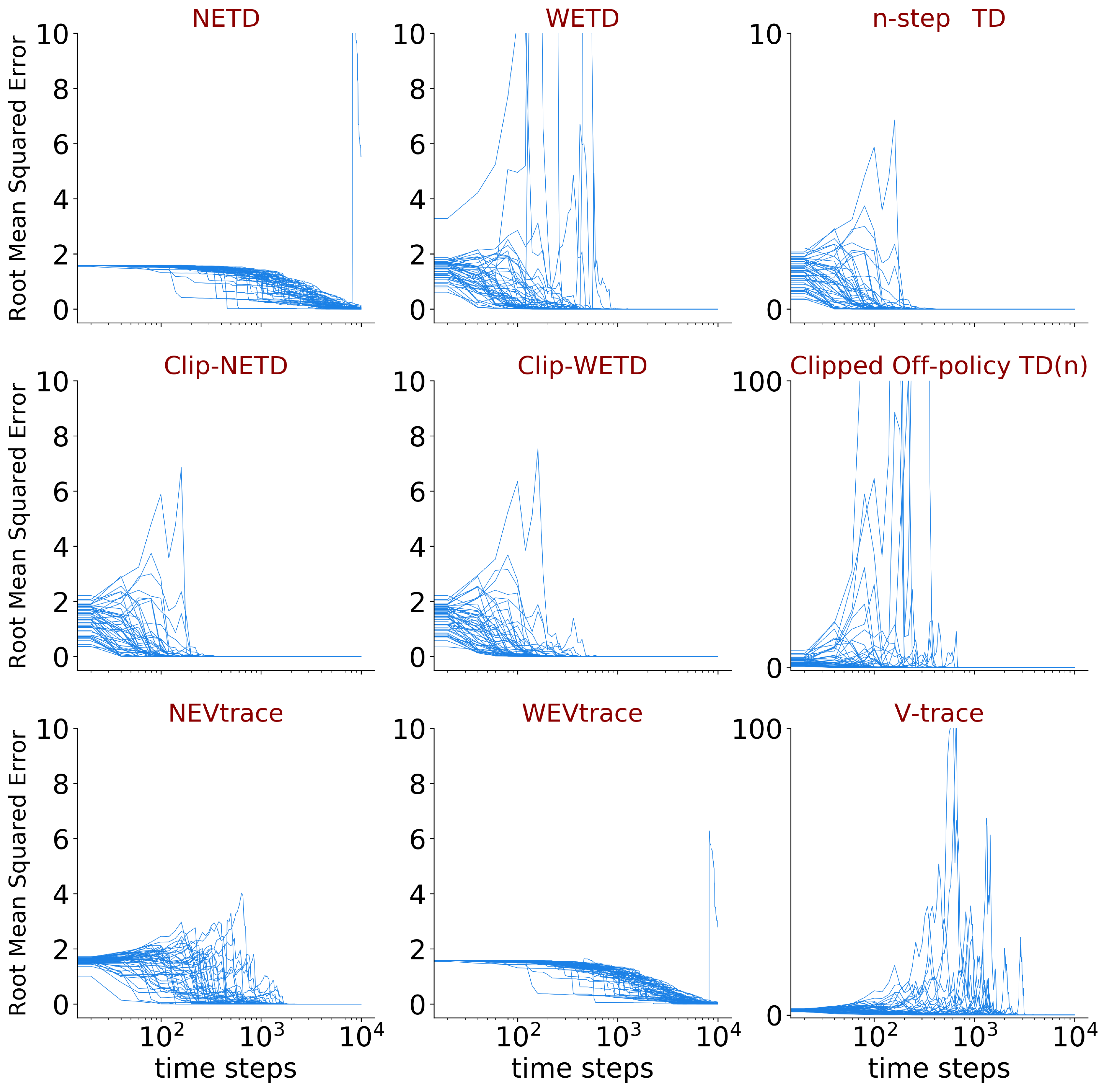}
\caption{Same experiment setup on the two-state MDP as Fig.~\ref{fig:2state_n_0} except $n=5$. Note the log scale on x-axis.}
\label{fig:2state_n_4}
\end{figure*}

\subsection{Collision Problem}
\label{app:collision}
We present the full results in Fig.~\ref{fig:collision_learning_curve} and Fig.~\ref{fig:collision_u_curves}. All algorithms achieved stable learning with their best learning rates from hyperparameter sweeps. Emphatic algorithms consistently achieved the smallest mean RMSE averaged over 200 runs for all values of $n$ tested. 

\begin{figure*}[h!]
\centering
\includegraphics[width=\linewidth]{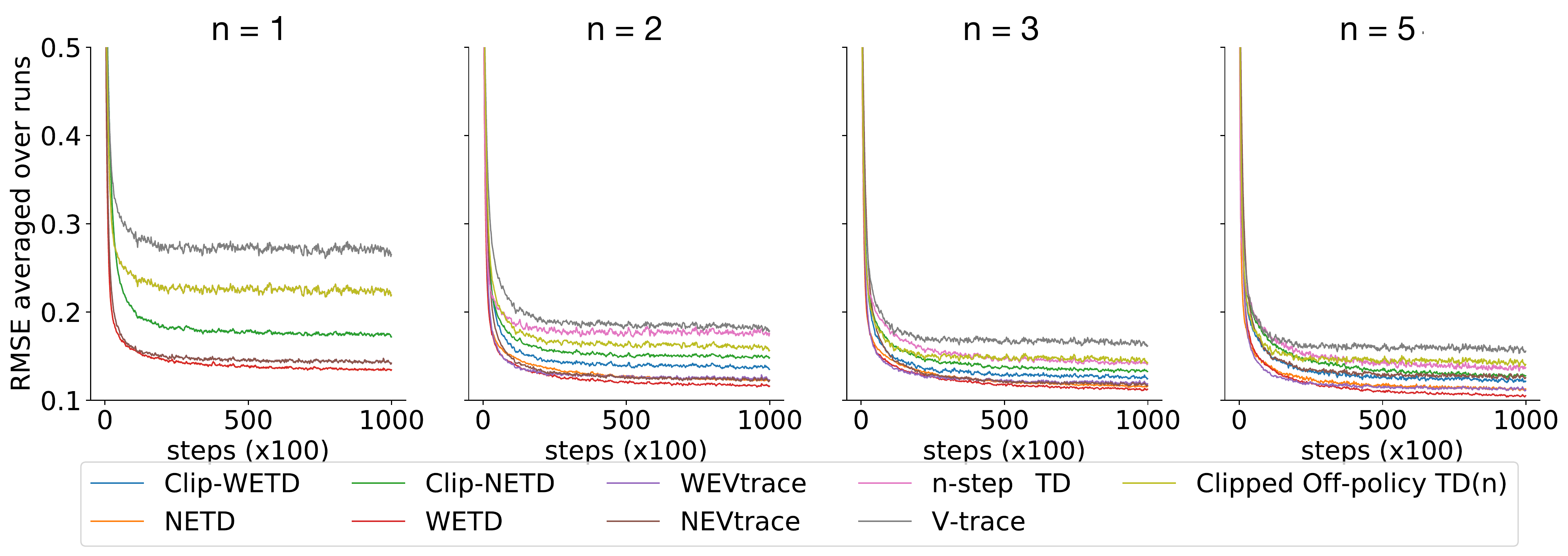}
\caption{Learning curves comparison on the Collision Episodic MDP. The results are averaged over 200 independent runs with each algorithms best hyper-parameter setting from the sweep. The three baselines produced the highest averaged RMSE errors. Emphatic algorithms ETD, NETD and EVtrace, NEVtrace consistently produced the lowest RMSEs for all bootstrap values $n$, followed by the clipped emphatic traces.}
\label{fig:collision_learning_curve}
\end{figure*}

\begin{figure*}[h!]
\centering
\includegraphics[width=\linewidth]{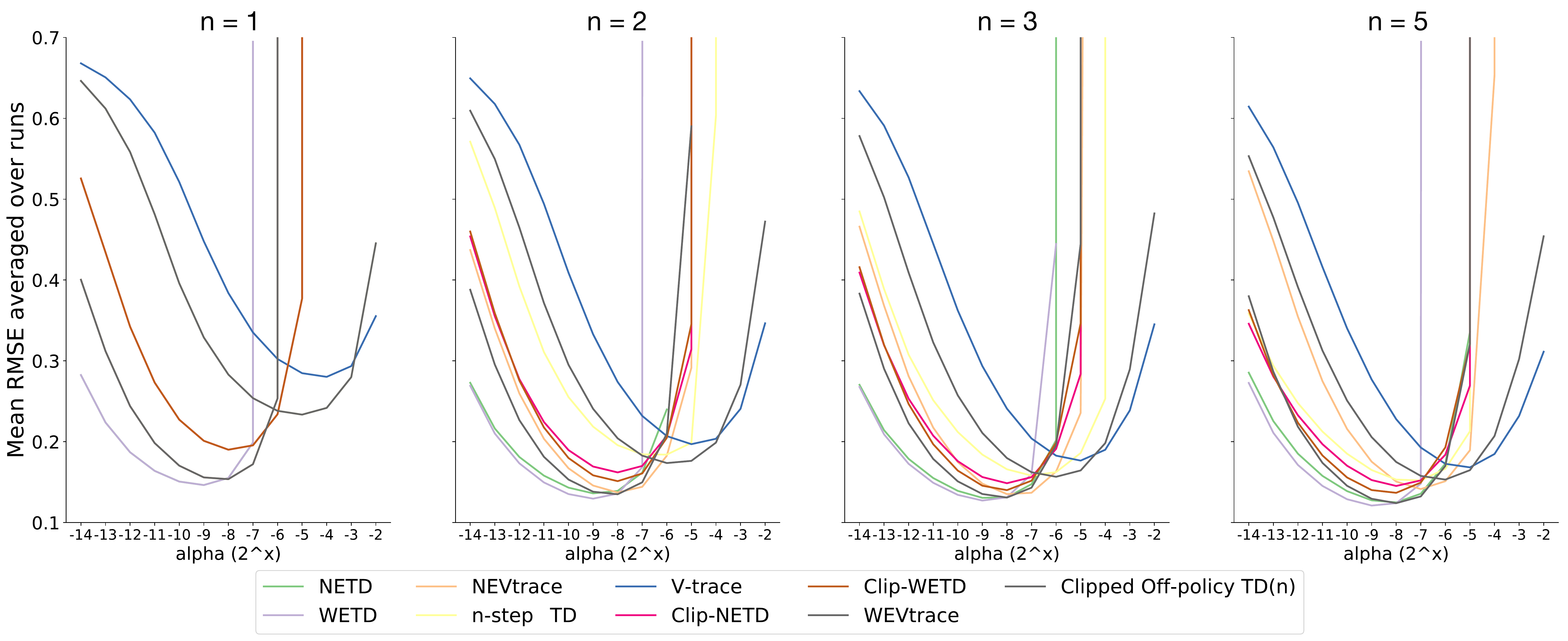}
\caption{Hyper-parameter sensitivity comparison on the Collision Episodic MDP. Each data point in the plot portrays the mean RMSE averaged over 200 runs for varying learning rate $\alpha$ and the bootstrap length $n$. The emphatic algorithms achieved the lowest value error with smaller learning rates than the baselines.}
\label{fig:collision_u_curves}
\end{figure*}

\subsection{Baird's counterexample}
\label{app:bairds}

Baird's counterexample (Fig.~\ref{fig:baird_mdp}) is a simple MDP with has seven states and simple linear features that causes TD and other methods to diverge. The features are designed to cause unnecessary generalization, even though the true value function is perfectly representable. This over-parameterization combined with a large mismatch in the target and behavior policies typically causes divergence. See \citep{sutton2018} for an extensive discussion and analysis of Baird's counterexample. 

Using the TD(0) learning update (Fig.~\ref{fig:baird_n_0}), both emphatic traces (row 1) converged quickly, with occasional instability in some runs. The clipped emphatic traces (row 2) exhibit slow learning, but exhibit a clear downward trend. The n-step TD baselines and all methods with V-trace targets diverged. This is not surprising as Baird's counterexample is considerably harder than the two-state MDP---Sutton's ETD($\lambda$) diverges on Baird's counterexample, but converges on the two-state MDP \cite{sutton2016emphatic}. Using 5-step TD learning (Fig.~\ref{fig:baird_n_4}) improves the performance of several methods. The WETD algorithms performed poorly compared to NETD algorithms and the n-step TD baseline. We see the effect of IS clipping: lowering variance of both WETD and NETD. Vtrace, NEVtrace, and WEVtrace all slowly diverged and WEVtrace exhibited unstable learning late in training. Clipped emphatic methods and n-step TD all benefit from longer n-step targets, significantly improving over their one-step variants in Fig ~\ref{fig:baird_n_0}. In this challenging MDP, one-step methods are not sufficient for fast and stable learning.  

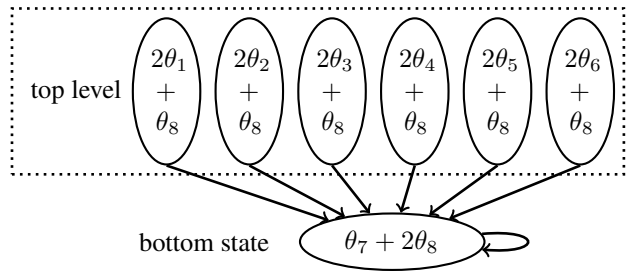
\begin{figure}[h!]
\centering
\begin{tikzpicture}[dgraph]
\node[ellip] (s1) at (0.,1) {$2\theta_1$\\$\,+$\\$\,\theta_8$};
\node[ellip] (s2) at (1.1,1) {$2\theta_2$\\$\,+$\\$\,\theta_8$};
\node[ellip] (s3) at (2.2,1) {$2\theta_3$\\$\,+$\\$\,\theta_8$};
\node[ellip] (s4) at (3.3,1) {$2\theta_4$\\$\,+$\\$\,\theta_8$};
\node[ellip] (s5) at (4.4,1) {$2\theta_5$\\$\,+$\\$\,\theta_8$};
\node[ellip] (s6) at (5.5,1) {$2\theta_6$\\$\,+$\\$\,\theta_8$};
\node[ellip,text width=1.5cm] (s7) at (3,-1) {$\,\,\theta_7+2\theta_8$};

\draw[](s1.south)--(s7);
\draw[](s2.south)--(s7);
\draw[](s3.south)--(s7);
\draw[](s4.south)--(s7);
\draw[](s5.south)--(s7);
\draw[](s6.south)--(s7);
\draw[](s7.5)to[out=5, in=-5,looseness=10] (s7.-5);

\node[input] (ss) at (-1.2, 1) {top level};
\node[system,fit=(ss) (s1) (s2) (s3) (s4) (s5) (s6)] {};
\node[input] (tt) at (0.5, -1) {bottom state};
\end{tikzpicture}
\caption{Baird's counterexample MDP. Solid lines indicate the target policy $\pi($down$|\cdot)=1$, ending up in the bottom state. The behavior policy $\mu($up$|\cdot)=6/7, \mu($down$|\cdot)=1/7$. When action is ``up'', the agent goes to a random state on the top level. When action is ``down'', the agent goes to the bottom state.}
\label{fig:baird_mdp}
\end{figure}

\begin{figure*}[h!]
\centering
\includegraphics[width=\linewidth]{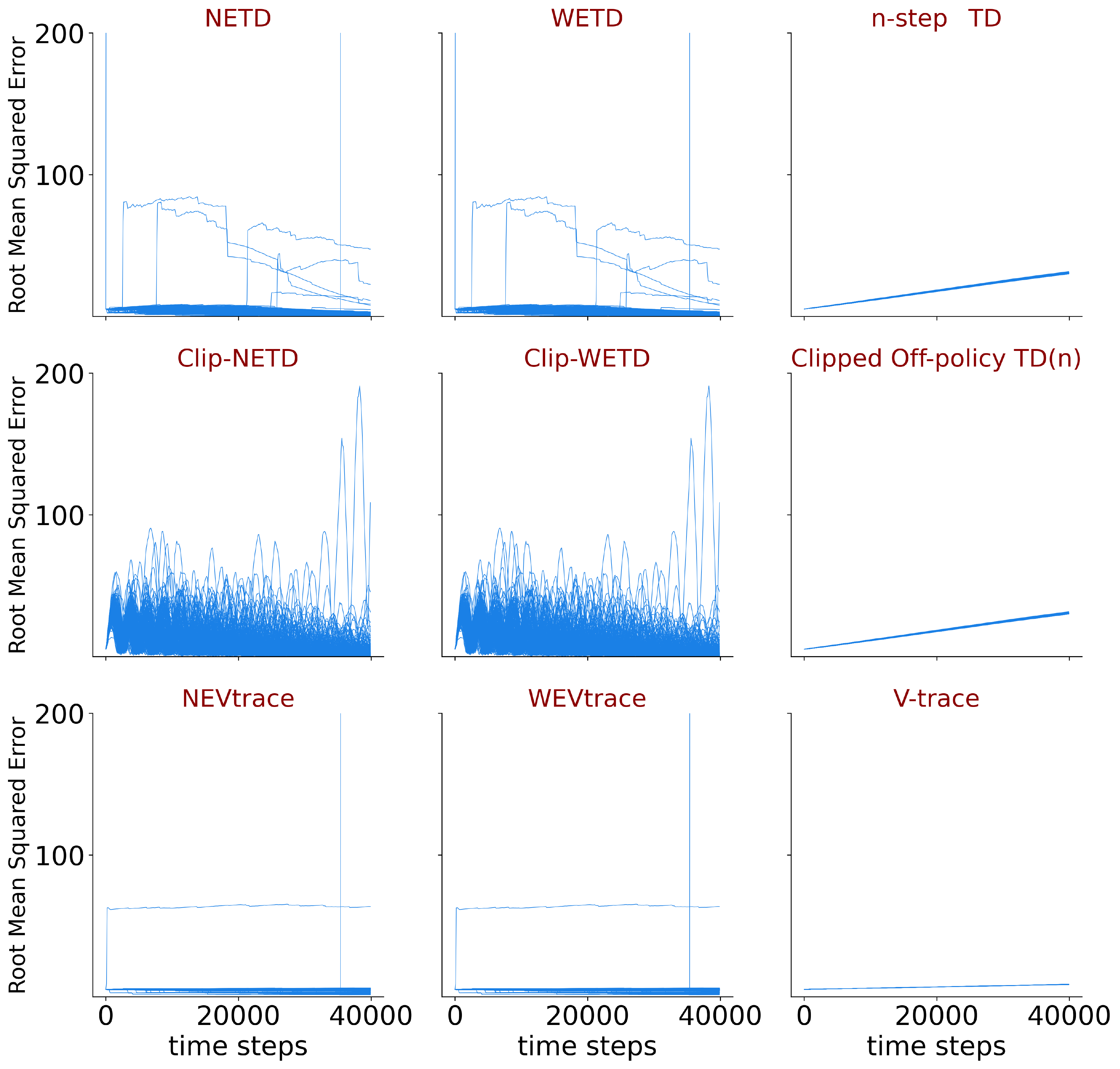}
\caption{Baird's MDP, $n=1, \gamma=0.9$, 200 indepdendent runs. Each algorithm was run with its best hyperparameter setting. Note the log scale on x-axis.}
\label{fig:baird_n_0}
\end{figure*}

\begin{figure*}[h!]
\centering
\includegraphics[width=\linewidth]{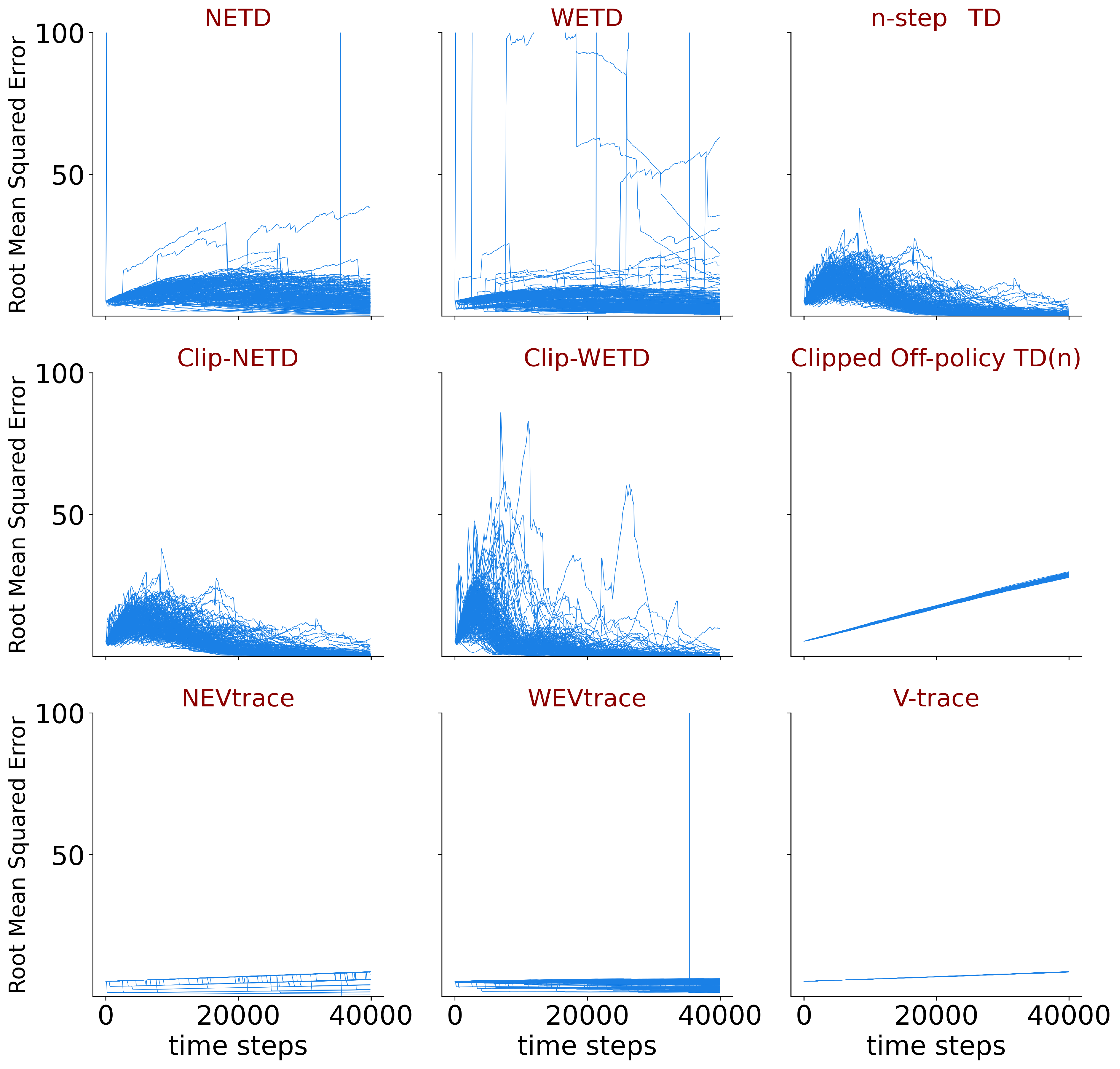}
\caption{Baird's MDP, $n=5, \gamma=0.9$, 200 indepdendent runs. Each algorithm was run with its best hyperparameter setting. NETD has a smaller variance than ETD algorithms. Clipping the IS in emphatic traces help reduce the variance. Note the log scale on x-axis.}
\label{fig:baird_n_4}
\end{figure*}

\end{document}